\documentclass[11pt,a4paper]{article}

\usepackage[utf8]{inputenc}
\usepackage[T1]{fontenc}
\usepackage{amsmath, amssymb, amsthm, mathtools}
\usepackage{graphicx}
\usepackage{hyperref}
\usepackage{url}
\usepackage{geometry}
\usepackage{tikz}
\usetikzlibrary{automata, positioning, arrows.meta}

\hypersetup{
  colorlinks=true,
  linkcolor=blue,
  urlcolor=blue,
  citecolor=blue
}

\newtheorem{definition}{Definition}
\newtheorem{lemma}{Lemma}
\newtheorem{theorem}{Theorem}
\newtheorem{proposition}{Proposition}
\newtheorem{corollary}{Corollary}
\theoremstyle{remark}
\newtheorem{remark}{Remark}

\newcommand{\eps}{\varepsilon}
\newcommand{\Pref}{\operatorname{Pref}}
\newcommand{\Next}{\operatorname{Next}}
\newcommand{\Reach}{\operatorname{Reach}}
\newcommand{\CoReach}{\operatorname{CoReach}}
\newcommand{\Vocab}{\mathcal{V}}
\newcommand{\Lang}{\mathcal{L}}
\newcommand{\eos}{\langle\mathrm{eos}\rangle}
\newcommand{\TV}{\operatorname{TV}}
\newcommand{\KL}{\operatorname{KL}}

\title{Attention Meets Reachability: Structural Equivalence and Efficiency\\
       in Grammar-Constrained LLM Decoding}
\author{
  \begin{minipage}[t]{0.47\textwidth}\centering
    Faruk Alpay \\
    Department of Computer Engineering, Bah\c{c}e\c{s}ehir University, Istanbul, Turkey \\
    \texttt{faruk.alpay@bahcesehir.edu.tr}
  \end{minipage}\hfill
  \begin{minipage}[t]{0.47\textwidth}\centering
    Bilge Senturk \\
    Department of Industrial Engineering, Bah\c{c}e\c{s}ehir University, Istanbul, Turkey \\
    \texttt{bilge.senturk@bahcesehir.edu.tr}
  \end{minipage}
}
\date{\today}

\begin{document}
\maketitle

% ============================================================
\begin{abstract}
% ============================================================
Grammar-constrained decoding (GCD) enforces that an autoregressive language model
generates outputs within a formal language specified by a context-free grammar (CFG).
We formalize GCD as a coupling between a Transformer-style next-token distribution and
a reachability oracle over a pushdown system compiled from a CFG, and develop a unified
theoretical framework addressing both the complexity structure of the constraint engine
and the probabilistic fidelity of the constrained generator.
We prove an \emph{oracle invariance} theorem: language-equivalent grammars induce
identical next-token admissibility sets for every prefix (hence identical logit masks),
yet can yield provably different compiled state spaces and online ambiguity costs.
We give exact state-count blowups for the canonical $a^n b^n$ language under redundant
nonterminal delegation, and introduce a left-to-right \emph{structural ambiguity cost}
(SAC) that measures incremental packed-parse-forest growth per token.
For two equivalent $\Sigma^*$ grammars, we prove that SAC is $O(1)$ per token under
right-recursion but $\Theta(t^2)$ per token (and $\Theta(n^3)$ cumulatively) under
concatenation, via tight bounds on packed parse forest density.
Beyond asymptotic complexity, we establish three further theoretical contributions.
First, we elevate SAC into \emph{engine-independent lower bounds}: any sound,
retrieval-efficient, parse-preserving online masking engine must incur $\Omega(t^2)$
work per token (and $\Omega(n^3)$ cumulatively) on a specific constant-size CFG family;
this parallels classical complexity results connecting CFG parsing to matrix
multiplication~\cite{valiant1975cfg,lee2002bmm}.
Second, we define \emph{decoding-cost equivalence classes} of grammars and prove the
existence of minimal-SAC representatives within bounded rewrite families, clarifying
what canonical low-SAC normal forms can and cannot mean.
Third, we formalize \emph{grammar-conditioned autoregressive processes}: the true
conditional sampler $p(\cdot \mid \tau(y)\in L)$ is characterized by a Doob
$h$-transform, and we prove sharp distortion bounds for hard-masked decoding in terms
of survival-probability spread among admissible next tokens.
Finally, we integrate these results with Transformer and Mixture-of-Experts (MoE)
architectures, derive decoding-time latency envelopes in terms of vocabulary size,
active state sets, and beam width, and extend SAC into a measurement-calibrated
predictive performance model that supports empirical validation and automated grammar
optimization.
\end{abstract}

% ============================================================
\section{Introduction}\label{sec:intro}
% ============================================================

Autoregressive large language models (LLMs) define distributions of the form
\begin{equation}
  p_\theta(y_{1:T})=\prod_{t=1}^{T} p_\theta(y_t \mid y_{<t}),
\end{equation}
where $y_t \in \Vocab$ is a vocabulary token and $\theta$ are model parameters.
In many deployments, however, the desired output is not arbitrary text but a structured
artifact (e.g., SQL, JSON, or a programming language fragment).
Grammar-constrained decoding (GCD) addresses this by restricting generation to tokens
that keep the current prefix \emph{completable} under a formal language specification,
typically a CFG, rejecting inadmissible tokens on the fly
\cite{scholak-etal-2021-picard,pmlr-v235-beurer-kellner24a}.
The same conceptual interface underlies popular open-source toolchains for structured
generation \cite{guidance,guidance_github,outlines_repo,willard2023efficient,llguidance_repo}.

This paper addresses a tension at the core of GCD: language equivalence is semantic
(two grammars that generate the same strings are interchangeable from the user's
perspective), but constrained decoding executes a \emph{specific} compiled recognizer
whose runtime depends on grammar structure.
Our central claim is that even when two CFGs generate exactly the same language,
they can induce radically different internal search spaces for a left-to-right decoding
engine.
This discrepancy is not semantic; it is purely structural, and it manifests through
pushdown reachability.

\paragraph{Contributions.}
We make six main contributions.

\begin{enumerate}
\item \textbf{Pushdown reachability formalization.}
We formalize GCD as a coupling between a neural model and a pushdown reachability
oracle, grounding the decoding step in CFL parsing and reachability theory
\cite{hopcroftullman1979automata,earley1970efficient,reps1995graphreachability,bouajjani1997reachability}.
We prove oracle invariance under language equivalence and give an exact algebraic
account of hard-masked decoding.

\item \textbf{State-space blowup bounds.}
We prove exact control-state blowup counts for the $a^n b^n$ language under redundant
nonterminal delegation, giving a concrete, mechanically checkable notion of
``engine state space inflation.''

\item \textbf{Structural ambiguity cost (SAC) and tight growth bounds.}
We introduce SAC as a left-to-right per-token measure of packed-parse-forest growth.
We prove that $\mathrm{SAC}$ is $\Theta(t^2)$ per token ($\Theta(n^3)$ cumulative) for
a concatenative $\Sigma^*$ grammar and $O(1)$ for an equivalent right-recursive one.

\item \textbf{Engine-independent lower bounds.}
We define a \emph{parse-preserving, retrieval-efficient} masking engine model and prove
that any such engine must incur $\Omega(t^2)$ work per token on a constant-size
CFG family, making the SAC lower bound unconditional within this semantic interface.
This is complementary to---but distinct from---Valiant/Lee-style
reductions~\cite{valiant1975cfg,lee2002bmm}.

\item \textbf{Decoding-cost equivalence classes and canonical low-SAC forms.}
We define $\equiv_{\mathrm{dec}}$ as joint language and SAC equivalence, and prove
that minimal-SAC grammar representatives exist within any finite bounded-rewrite family.

\item \textbf{Grammar-conditioned autoregressive processes.}
We characterize the true conditional sampler $p(\cdot \mid \tau(y)\in L)$ via a Doob
$h$-transform and derive sharp one-step KL and total-variation distortion bounds for
hard-masked decoding in terms of the survival-probability spread among admissible tokens.
\end{enumerate}

We additionally derive algebraic constraints describing how the reachability-valid
transition subset modulates Transformer logits and MoE routing
\cite{vaswani2017attention,shazeer2017moe,fedus2022switch}, and show how SAC can be
operationalized into a predictive performance model calibrated against empirical traces
from real inference stacks
\cite{geng2025jsonschemabench,dong2024xgrammar,chen-etal-2025-pre3}.

% ============================================================
\section{Decoding as Pushdown Reachability}\label{sec:pushdown}
% ============================================================

We fix (i) a CFG over an abstract terminal alphabet, (ii) a tokenizer mapping from
model tokens to terminal strings, and (iii) an explicit compilation of the CFG to a
pushdown automaton whose reachable configuration set determines admissible next tokens.

\subsection{Languages, prefixes, and tokenization}

\begin{definition}[CFG]\label{def:cfg}
A context-free grammar is a tuple $G=(N,\Sigma,P,S)$ where $N$ is a finite set of
nonterminals, $\Sigma$ is a finite set of terminals, $P \subseteq N \times (N \cup
\Sigma)^*$ is a finite set of productions written $A \to \alpha$, and $S \in N$ is the
start symbol.
The language generated by $G$ is denoted $\Lang(G)\subseteq \Sigma^*$.
\end{definition}

\begin{definition}[Prefix closure and one-step extension]\label{def:prefix}
For any language $L\subseteq \Sigma^*$, define its prefix closure
\[
  \Pref(L)\coloneqq \{u\in \Sigma^* \;:\; \exists v\in \Sigma^* \text{ such that }
  uv \in L\}.
\]
Define the \emph{next-terminal set} after prefix $u$ as
\[
  \Next_L(u)\coloneqq \{a\in \Sigma \;:\; ua \in \Pref(L)\}.
\]
Define the end signal $\eos$ as admissible after $u$ iff $u\in L$.
\end{definition}

\begin{definition}[Tokenizer homomorphism]\label{def:tokenizer}
Let $\Vocab$ be the LLM vocabulary.
A tokenizer homomorphism is a map $\tau:\Vocab \to \Sigma^*$.
For a token sequence $y_{1:T}\in \Vocab^T$, define its realized terminal string as
\[
  \tau(y_{1:T}) \coloneqq \tau(y_1)\tau(y_2)\cdots \tau(y_T) \in \Sigma^*,
\]
where juxtaposition denotes concatenation in $\Sigma^*$.
We set $\tau(\eos)=\eps$.
\end{definition}

The constraint ``the model output is in $\Lang(G)$'' is interpreted as
$\tau(y_{1:T})\in \Lang(G)$.

\subsection{Pushdown automata and CFG compilation}

\begin{definition}[Nondeterministic PDA]\label{def:npda}
A nondeterministic pushdown automaton (NPDA) is a tuple
\[
  \mathcal{A}=(Q,\Sigma,\Gamma,\delta,q_0,\bot,F),
\]
where $Q$ is a finite set of control states, $\Gamma$ is a finite stack alphabet with
bottom marker $\bot$, $q_0\in Q$ is the initial state, $F\subseteq Q$ is the set of
accepting states, and
\[
  \delta: Q \times (\Sigma\cup\{\eps\}) \times \Gamma \to \mathcal{P}(Q\times \Gamma^*)
\]
is the transition function.
A configuration is a triple $(q,u,\gamma)$ where $q\in Q$, $u\in \Sigma^*$ is the
unread input, and $\gamma\in \Gamma^*$ is the stack contents with $\bot$ at the bottom.
Acceptance is by final state with empty input and stack exactly $\bot$.
\end{definition}

\begin{definition}[Recursive-transition-network compilation]\label{def:rtn}
Given $G=(N,\Sigma,P,S)$, define $\mathcal{A}_G$ as follows.
For each nonterminal $A\in N$, create control states $q_A^{\mathrm{in}}$ and
$q_A^{\mathrm{out}}$.
For each production $p=(A\to X_1\cdots X_m)\in P$ and each dot position
$i\in\{0,1,\dots,m\}$, create a control state $q_{p,i}$.
Let
\[
  Q \coloneqq \{q_{\mathrm{start}}\} \cup \{q_A^{\mathrm{in}},q_A^{\mathrm{out}}
  :A\in N\} \cup \{q_{p,i}:p\in P,\,i\}.
\]
Let the stack alphabet be $\Gamma \coloneqq \{\bot\}\cup \{q_{p,i}:p\in P,\,i\}$.
Define transitions:
\begin{itemize}
\item Start: $\delta(q_{\mathrm{start}},\eps,\bot)\ni (q_S^{\mathrm{in}},\bot)$.
\item Choice of production: for each production $p=(A\to \cdots)$,
  $\delta(q_A^{\mathrm{in}},\eps,\gamma)\ni (q_{p,0},\gamma)$ for all $\gamma\in \Gamma$.
\item Exit: $\delta(q_{p,m},\eps,\gamma)\ni (q_A^{\mathrm{out}},\gamma)$.
\item Terminals: if $X_i\in\Sigma$, then
  $\delta(q_{p,i-1},X_i,\gamma)\ni (q_{p,i},\gamma)$.
\item Nonterminal calls: if $X_i=B\in N$, then
  $\delta(q_{p,i-1},\eps,\gamma)\ni (q_B^{\mathrm{in}},q_{p,i}\gamma)$ and, for each
  return address $r=q_{p,i}$, $\delta(q_B^{\mathrm{out}},\eps,r)\ni (r,\eps)$.
\end{itemize}
Set $q_0=q_{\mathrm{start}}$ and $F=\{q_S^{\mathrm{out}}\}$.
We call $Q$ the \emph{engine control-state space}.
\end{definition}

\begin{proposition}[Correctness of compilation]\label{prop:correctness}
For every CFG $G$, the compiled NPDA $\mathcal{A}_G$ accepts exactly $\Lang(G)$.
\end{proposition}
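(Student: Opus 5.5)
I plan to prove a single invariant characterizing the runs of $\mathcal{A}_G$ in terms of derivations of $G$, and then specialize it to the start and accepting configurations. The central claim is a \emph{balanced-run lemma}. For every nonterminal $A\in N$, every $w\in\Sigma^*$ and every stack suffix $\gamma\in\Gamma^*$, the following are equivalent:
\[
(q_A^{\mathrm{in}},w,\gamma)\vdash^* (q_A^{\mathrm{out}},\eps,\gamma)
\text{ by a run never popping below }\gamma
\quad\Longleftrightarrow\quad A\Rightarrow^* w.
\]
I will prove this together with a companion statement about partial productions. For $p=(A\to X_1\cdots X_m)$ and $0\le i\le j\le m$, there is such a balanced run from $q_{p,i}$ to $q_{p,j}$ consuming exactly $w$ iff $X_{i+1}\cdots X_j\Rightarrow^* w$.

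The key preliminary observation is that every transition either leaves the stack unchanged or pushes or pops a single symbol. Pushes occur only at calls and pops only at $q_B^{\mathrm{out}}$ with a return address on top. Moreover, the only way to leave $q_B^{\mathrm{out}}$ is to pop. Consequently, any run that does not go below $\gamma$ decomposes uniquely into calls matched with returns, which is the usual well-nestedness of RTN runs.

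For the direction ($\Leftarrow$) I will use induction on the height of the derivation tree. Given $A\to X_1\cdots X_m$ with $X_k\Rightarrow^* w_k$, I will build the run in four pieces:
\begin{itemize}
\item the choice step $q_A^{\mathrm{in}}\to q_{p,0}$;
\item for each terminal $X_k$, the single read step $q_{p,k-1}\to q_{p,k}$;
\item for each nonterminal $X_k=B$, the sequence: push $q_{p,k}$, enter $q_B^{\mathrm{in}}$, follow the run given by the induction hypothesis on top of $q_{p,k}\gamma$ to reach $q_B^{\mathrm{out}}$, then pop $q_{p,k}$ to land in $q_{p,k}$;
\item the exit step to $q_A^{\mathrm{out}}$.
\end{itemize}

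For the direction ($\Rightarrow$) I will use induction on run length. The first move from $q_A^{\mathrm{in}}$ must be a choice $q_{p,0}$ for some $p$ with left-hand side $A$. The last move into $q_A^{\mathrm{out}}$ with stack $\gamma$ must be an exit from some $q_{p',m'}$. I then need to show that $p'=p$ and that the middle portion splits at the dot positions.

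This is the main obstacle, because the control states alone do not record which production was chosen at entry. I will resolve it by tracking, at stack height $|\gamma|$, the sequence of states visited at that height. Every state at that height other than the endpoints has the form $q_{p,i}$ or is a call or return boundary. Dot states only advance within the same production $p$, and the state reached after a pop is exactly the pushed return address $q_{p,i}$, which belongs to the same $p$. Hence $p$ is preserved at the base level until the exit, which forces $p'=p$ and $m'=m$.

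Each excursion above height $|\gamma|$ begins with a push of $q_{p,k}$ from $q_{p,k-1}$ into $q_B^{\mathrm{in}}$, where $X_k=B$. It ends with the matching pop from $q_B^{\mathrm{out}}$, and the intermediate segment is a balanced run of $B$ that is strictly shorter. The induction hypothesis then gives $B\Rightarrow^* w_k$. Concatenating the pieces yields $A\Rightarrow X_1\cdots X_m\Rightarrow^* w$.

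Finally, I will specialize to the start symbol. An accepting run starts at $(q_{\mathrm{start}},w,\bot)$ and ends in $(q_S^{\mathrm{out}},\eps,\bot)$. Its first step is forced to reach $q_S^{\mathrm{in}}$ with stack $\bot$. Since $\bot$ is never a return address, no pop can consume $\bot$, so the rest of the run is balanced over $\gamma=\bot$. The lemma then gives $w\in\Lang(G)$.

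One loose end concerns states $q_S^{\mathrm{out}}$ reached at stack $\bot$ in the middle of a run. These are fine, since acceptance only inspects the final configuration. The converse direction follows directly from ($\Leftarrow$).
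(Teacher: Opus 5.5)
Your proof is correct. It carries out the run/derivation correspondence that the paper's proof only asserts, but through a different lemma than the one the paper points to.

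The paper stops at ``an accepting run corresponds to a leftmost derivation tree and vice versa.'' It defers the details to adapting the textbook CFG$\to$PDA argument~\cite{hopcroftullman1979automata}. That argument rests on a single invariant about leftmost derivations: after reading $x$, the machine is in a configuration that encodes the unexpanded remainder $\alpha$ of a leftmost sentential form $x\alpha$. For the dot-state compilation of Definition~\ref{def:rtn}, this adaptation requires decoding each configuration into a sentential-form suffix. The suffix is $X_{i+1}\cdots X_m$ for the current state $q_{p,i}$, followed by the analogous suffix for each return address on the stack. One must then check that every transition acts correctly on this decoding.

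Your compositional balanced-run lemma needs no such decoding. It speaks only about complete sub-runs, and it isolates the one structural fact that matters: return addresses are production-specific. Hence the production chosen at $q_A^{\mathrm{in}}$ persists at that stack level until the exit. Both directions then close by induction, on derivation height one way and run length the other. You also get a span-level statement, that a balanced $q_B^{\mathrm{in}}\to q_B^{\mathrm{out}}$ run on $w$ exists iff $B\Rightarrow^* w$. This matches the span-indexed objects ($S[i,j]$, Earley-style items) used later in the paper.

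The invariant route instead speaks directly about intermediate configurations on a prefix, which is the level at which $\Reach_G(u)$ and $\CoReach_G(u)$ are defined. For the proposition, and for its use in Theorem~\ref{thm:oracle}, your acceptance-level statement is enough. Indeed, $\CoReach_G(w)\neq\emptyset$ iff $wv\in\Lang(G)$ for some $v$, by splitting and concatenating runs.

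Three small tightenings:
\begin{enumerate}
\item State the lemma for $\gamma\in\Gamma^*\bot$, as the paper's notion of configuration requires. With an empty stack no transition of $\mathcal{A}_G$ can fire, so ($\Leftarrow$) would be false for $\gamma=\eps$; you never use that case anyway.
\item When an excursion begun at $q_{p,k-1}$ ends, the pop is taken from some $q_{B'}^{\mathrm{out}}$ with $q_{p,k}$ on top. Say explicitly that Definition~\ref{def:rtn} provides this return only when $B'=X_k=B$; that is what makes the interior a balanced run of $B$.
\item Your loose end about $q_S^{\mathrm{out}}$ is vacuous rather than merely harmless. $q_S^{\mathrm{out}}$ with $\bot$ on top has no outgoing transition, so it can only be the last configuration of a run. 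This already follows from your own observation that $q_B^{\mathrm{out}}$ can be left only by a pop.
\end{enumerate}
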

\begin{proof}
Each nonterminal invocation pushes a return address and transitions into the callee's
entry state; completing a nonterminal pops the return address and resumes at the
caller's dot-position state.
Terminal transitions consume exactly the required terminal; nondeterminism models
production choice.
An accepting run corresponds to a leftmost derivation tree and vice versa.
A detailed proof follows by adapting the classic CFG$\to$PDA conversion in
\cite[Ch.~6]{hopcroftullman1979automata} to the dot-position control-state
representation.
\end{proof}

\subsection{Engine semantics for left-to-right masking}

\begin{definition}[Reachability and liveness]\label{def:reach}
Fix a compiled NPDA $\mathcal{A}_G$.
For a terminal prefix $u\in \Sigma^*$, define $\Reach_G(u)$ as the set of
configurations reachable from the initial configuration after consuming exactly $u$,
allowing any number of $\eps$-moves before, between, and after terminal moves.
A configuration $c$ is \emph{live} if there exists some $v\in \Sigma^*$ such that
$c$ reaches an accepting configuration after consuming $v$.
Let $\CoReach_G(u)\subseteq \Reach_G(u)$ be the subset of reachable configurations
that are live.
\end{definition}

\begin{definition}[Admissible next vocabulary tokens]\label{def:admissible}
Given prefix $u\in \Sigma^*$, define the admissible token set
\[
  \Omega_G(u) \coloneqq \{v\in \Vocab \;:\; \CoReach_G(u\,\tau(v))\neq \emptyset\}.
\]
\end{definition}

\begin{remark}[Reachability as the computational bottleneck]
Computing $\CoReach_G$ is a reachability question in a pushdown system, tightly
connected to CFL reachability and interprocedural analysis
\cite{reps1995graphreachability,alur2005rsm,bouajjani1997reachability}.
Modern GCD implementations reduce this cost via offline preprocessing, memoization,
deterministic pushdown compilation, and careful tokenizer alignment
\cite{pmlr-v235-beurer-kellner24a,park2025flexible,dong2024xgrammar,ugare2024syncode,chen-etal-2025-pre3}.
\end{remark}

% ============================================================
\section{Structural Equivalence, Masked Decoding, and Probability Constraints}
\label{sec:oracle}
% ============================================================

\subsection{Oracle invariance under language equivalence}

\begin{definition}[Language equivalence]\label{def:lang-equiv}
Two CFGs $G$ and $G'$ are (strongly) equivalent if $\Lang(G)=\Lang(G')$.
\end{definition}

\begin{theorem}[Oracle invariance]\label{thm:oracle}
If $\Lang(G)=\Lang(G')$, then for every terminal prefix $u\in \Sigma^*$,
\[
  \Omega_G(u)=\Omega_{G'}(u).
\]
Consequently, any decoding procedure that uses only the pair
$(p_\theta(\cdot \mid y_{<t}),\,\Omega_G(\tau(y_{<t})))$ is \emph{behaviorally
invariant} under replacing $G$ by $G'$.
\end{theorem}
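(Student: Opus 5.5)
The plan is to show that the admissible set $\Omega_G(u)$ depends on $G$ only through the language $\Lang(G)$. Concretely, we aim for the characterization
\[
  \CoReach_G(w)\neq\emptyset \iff w\in\Pref(\Lang(G)) \qquad\text{for every } w\in\Sigma^*.
\]
Once this holds, Definition~\ref{def:admissible} gives
\[
  \Omega_G(u)=\{v\in\Vocab : u\,\tau(v)\in\Pref(\Lang(G))\},
\]
which depends only on $\Lang(G)$. The hypothesis $\Lang(G)=\Lang(G')$ then yields $\Omega_G(u)=\Omega_{G'}(u)$ immediately.

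We prove the characterization in two directions, working at the level of runs of $\mathcal{A}_G$ and using Proposition~\ref{prop:correctness}.

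\textbf{($\Leftarrow$)} Suppose $w\in\Pref(\Lang(G))$, and pick $x$ with $wx\in\Lang(G)$. By Proposition~\ref{prop:correctness} there is an accepting run of $\mathcal{A}_G$ on $wx$. Cut this run right after the last terminal move that consumes the final symbol of $w$; if $w=\eps$, cut at the initial configuration. The configuration $c$ at the cut lies in $\Reach_G(w)$, since the prefix of the run consumes exactly $w$ with interleaved $\eps$-moves, matching Definition~\ref{def:reach}. It is also live, because the remainder of the run consumes $x$ and reaches acceptance. Hence $c\in\CoReach_G(w)$.

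\textbf{($\Rightarrow$)} Suppose $c\in\CoReach_G(w)$. Then some run goes from the initial configuration to $c$ consuming $w$, and some continuation from $c$ consumes a string $x$ and accepts. Concatenating the two runs gives an accepting run on $wx$. Proposition~\ref{prop:correctness} gives $wx\in\Lang(G)$, so $w\in\Pref(\Lang(G))$.

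The main subtlety is bookkeeping rather than depth. Configurations in Definition~\ref{def:npda} carry the unread input, while $\Reach$ and liveness are phrased in terms of consumed strings. We will therefore treat configurations modulo their unread-input component, as pairs (state, stack). This lets runs on $w$ and on $x$ be spliced cleanly, which is justified because the transitions of $\mathcal{A}_G$ never inspect input beyond the current symbol.

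The behavioral-invariance corollary is then formal. A decoding procedure that depends only on $(p_\theta(\cdot\mid y_{<t}),\Omega_G(\tau(y_{<t})))$ receives identical inputs at every step under $G$ and $G'$, by induction on $t$ along any token history. It therefore induces the same output distribution, or the same deterministic output, in both cases. The $\eos$ admissibility condition $u\in L$ likewise depends only on $L$.
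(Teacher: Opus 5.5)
Your proof is correct and follows the paper's route: you reduce $\Omega_G(u)$ to the condition $u\,\tau(v)\in\Pref(\Lang(G))$ via $\CoReach_G(w)\neq\emptyset\iff w\in\Pref(\Lang(G))$, and then use that equal languages have equal prefix closures. The paper asserts that characterization directly from Proposition~\ref{prop:correctness}, whereas you derive it by cutting and splicing accepting runs and also spell out the induction behind behavioral invariance, which fills in steps the paper leaves implicit.
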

\begin{proof}
Fix $u\in \Sigma^*$ and $v\in\Vocab$.
By definition, $v\in \Omega_G(u)$ iff $\CoReach_G(u\tau(v))\neq\emptyset$.
By correctness of compilation (Proposition~\ref{prop:correctness}),
$\CoReach_G(w)\neq\emptyset$ iff $w\in \Pref(\Lang(G))$ (i.e., $w$ is completable).
Thus
\[
 v\in \Omega_G(u)\ \Longleftrightarrow\ u\tau(v)\in \Pref(\Lang(G)).
\]
Since $\Lang(G)=\Lang(G')$ implies $\Pref(\Lang(G))=\Pref(\Lang(G'))$, the condition
is identical for $G'$.
\end{proof}

\subsection{Masked decoding as a constrained stochastic process}

Let $\ell_t \in \mathbb{R}^{|\Vocab|}$ denote the model logits at step $t$, with
\[
  p_\theta(v \mid y_{<t}) = \frac{\exp(\ell_t(v))}{\sum_{w\in\Vocab}\exp(\ell_t(w))}.
\]

\begin{definition}[Hard mask operator]\label{def:mask}
Given $A\subseteq \Vocab$, define $m_A\in (\mathbb{R}\cup\{-\infty\})^{|\Vocab|}$ by
\[
  m_A(v) \coloneqq
  \begin{cases}
  0, & v\in A,\\
  -\infty, & v\notin A.
  \end{cases}
\]
Define the masked next-token distribution
\[
  q_{\theta,A}(v \mid y_{<t}) \coloneqq \mathrm{softmax}(\ell_t + m_A)(v)
  = \frac{\exp(\ell_t(v))\mathbf{1}[v\in A]}{\sum_{w\in A}\exp(\ell_t(w))}.
\]
\end{definition}

In GCD, $A=\Omega_G(\tau(y_{<t}))$.
By Theorem~\ref{thm:oracle}, this depends only on the language, not on the grammar
presentation.

\begin{proposition}[Soundness of hard-masked decoding]\label{prop:soundness}
Let $q_\theta^G$ denote autoregressive sampling where at each step
$y_t \sim q_{\theta,\Omega_G(\tau(y_{<t}))}(\cdot \mid y_{<t})$.
Then every generated sample $y_{1:T}$ terminated by $\eos$ satisfies
$\tau(y_{1:T})\in \Lang(G)$.
\end{proposition}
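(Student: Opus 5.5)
We argue by induction on the generation step that every prefix produced by the sampler is completable. We then read off membership at the moment $\eos$ is emitted.

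The invariant is $\tau(y_{<t})\in\Pref(\Lang(G))$ for every step $t$ reached with positive probability under $q_\theta^G$. The base case $t=1$ is $\tau(y_{<1})=\eps$. Here we need $\eps\in\Pref(\Lang(G))$, which holds whenever $\Lang(G)\neq\emptyset$. If $\Lang(G)=\emptyset$, then $\Omega_G(\eps)=\emptyset$ and no $\eos$-terminated sample is ever produced, so the claim is vacuous. We state this degenerate case explicitly, since the masked softmax of Definition~\ref{def:mask} is undefined on an empty set.

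For the inductive step, suppose the invariant holds at step $t$ and $y_t$ is an ordinary vocabulary token. Definition~\ref{def:mask} shows that $q_{\theta,A}$ places zero mass outside $A=\Omega_G(\tau(y_{<t}))$. So with probability one, $y_t\in\Omega_G(\tau(y_{<t}))$. By the characterization extracted in the proof of Theorem~\ref{thm:oracle}, which uses Proposition~\ref{prop:correctness} to give $\CoReach_G(w)\neq\emptyset \iff w\in\Pref(\Lang(G))$, this means $\tau(y_{<t})\tau(y_t)=\tau(y_{\le t})\in\Pref(\Lang(G))$. This is the invariant at step $t+1$.

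Termination needs a convention about $\eos$. Definition~\ref{def:prefix} declares $\eos$ admissible after $u$ iff $u\in L$. If we used $\tau(\eos)=\eps$ naively inside Definition~\ref{def:admissible}, $\eos$ would be admissible whenever $u\in\Pref(L)$, which is too weak. We therefore read the mask as $\Omega_G(u)\cup\{\eos\}$ when $u\in\Lang(G)$ and as $\Omega_G(u)$ otherwise. Equivalently, $\eos\in\Omega_G(u)$ iff some reachable configuration after $u$ is accepting, i.e.\ $q_S^{\mathrm{out}}$ with stack $\bot$. By Proposition~\ref{prop:correctness}, that happens iff $u\in\Lang(G)$.

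With this convention, if $y_T=\eos$ is sampled, then $\eos$ lay in the mask, so $\tau(y_{<T})\in\Lang(G)$. Since $\tau(\eos)=\eps$, it follows that $\tau(y_{1:T})=\tau(y_{<T})\in\Lang(G)$.

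The main obstacle is this treatment of $\eos$: the statement only holds under the acceptance-based reading of admissibility for the end signal, and the proof must make that reading precise. The rest is a routine support argument that uses only the zero-mass property of the hard mask and the prefix characterization of $\CoReach_G$.
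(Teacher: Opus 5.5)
Your proof is correct and takes the same approach as the paper's. The mask keeps every realized prefix in $\Pref(\Lang(G))$, $\eos$ can be sampled only when the current prefix already lies in $\Lang(G)$, and $\tau(\eos)=\eps$ then gives membership. Your explicit acceptance-based convention for $\eos$ is a useful addition: the paper's step ``$\eos$ is admissible only when the realized prefix is in $\Lang(G)$'' silently relies on Definition~\ref{def:prefix}, because applying Definition~\ref{def:admissible} literally with $\tau(\eos)=\eps$ would admit $\eos$ on every completable prefix, and the claim would then fail.
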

\begin{proof}
At each step, the sampled token keeps the realized prefix in $\Pref(\Lang(G))$.
Moreover, $\eos$ is admissible only when the current realized prefix is in $\Lang(G)$.
Therefore termination implies membership.
\end{proof}

\begin{remark}[Masking is not the true conditional distribution]
Sampling with hard masks yields a distribution supported on $\Lang(G)$, but it is not
in general equal to conditioning the base model on $\tau(y)\in\Lang(G)$.
Exact conditioning requires weighting by the model's probability of \emph{eventual
successful completion} from each candidate next token.
We characterize this discrepancy precisely via a Doob $h$-transform in
Section~\ref{sec:stoch}.
\end{remark}

\begin{lemma}[Separation from global conditioning]\label{lem:separation}
There exist a base autoregressive model $p_\theta$ and a language $L$ such that the
masked process $q_\theta^L$ differs from the globally conditioned distribution
$p_\theta(\cdot \mid \tau(y)\in L)$.
\end{lemma}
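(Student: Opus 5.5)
The plan is to exhibit an explicit counterexample small enough that both distributions can be computed by hand. I take $\Sigma=\{a,b\}$ and a vocabulary $\Vocab=\{a,b,\eos\}$ with $\tau(a)=a$, $\tau(b)=b$. The language is the finite set $L=\{aa,\,b\}$.

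The base model $p_\theta$ has a fixed horizon of at most three steps. At the first step it puts $p_\theta(a)=p_\theta(b)=\tfrac12$. After $b$ it emits $\eos$ with probability one. After the prefix $a$, it emits $a$ with probability $\eps$ and $\eos$ with probability $1-\eps$, for some fixed $\eps\in(0,1)$. After $aa$ it emits $\eos$ with probability one. All sequences are therefore terminated by $\eos$, and the sequence probabilities are immediate: $p_\theta(b\,\eos)=\tfrac12$, $p_\theta(aa\,\eos)=\tfrac{\eps}{2}$ and $p_\theta(a\,\eos)=\tfrac{1-\eps}{2}$. Since $a\notin L$, the event $\tau(y)\in L$ has probability $\tfrac{1+\eps}{2}$. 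Hence the global conditional gives
\[
p_\theta(\tau(y)=aa \mid \tau(y)\in L)=\frac{\eps}{1+\eps}.
\]

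Next I compute the masked process $q_\theta^L$ of Proposition~\ref{prop:soundness}, using Theorem~\ref{thm:oracle}'s characterization $v\in\Omega(u)\iff u\tau(v)\in\Pref(L)$ together with the convention that $\eos$ is admissible iff $u\in L$.

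1. At the empty prefix, both $a$ and $b$ are admissible, so the masked distribution agrees with the base model: each has probability $\tfrac12$.
2. After $a$, the admissible set is $\{a\}$, because $a\notin L$ removes $\eos$ and $ab\notin\Pref(L)$ removes $b$. The renormalized distribution therefore puts mass $1$ on $a$.
3. After $b$ and after $aa$, the only admissible token is $\eos$.

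Consequently $q_\theta^L(aa)=\tfrac12$. This differs from $\tfrac{\eps}{1+\eps}$ for every $\eps\neq1$, and $\eps<1$ by choice (for example, $\eps=\tfrac14$ gives $\tfrac15\neq\tfrac12$). The two distributions are therefore different, which proves the lemma.

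The only delicate step is making sure the example is well posed. The base model must be a genuine autoregressive distribution whose sequences all terminate, so that the conditioning event has positive probability. The mask must also follow exactly the admissibility rule of Definition~\ref{def:admissible}, including the $\eos$ convention, so that step 2 above truly forces $a$. In the construction above both requirements are checked directly. Conceptually, the discrepancy comes from the fact that the mask ignores how likely the prefix $a$ is to be completed successfully, which is only $\eps$. This foreshadows the Doob $h$-transform weighting developed in Section~\ref{sec:stoch}.
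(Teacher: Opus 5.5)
Your proof is correct and takes essentially the paper's approach: the paper also builds an explicit two-string finite language ($L=\{a,ba\}$ with first-step probabilities $0.6/0.4$) where the first-step mask prunes nothing but the two first tokens have very different completion probabilities, so the masked marginal ($0.4$) disagrees with the conditional one ($\approx 0.0625$). Your mirrored example $L=\{aa,b\}$ with a free parameter works the same way; note only that $\eps$ already denotes the empty string in the paper, so a different letter such as $\delta$ would avoid a notational clash.
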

\begin{proof}
Let $\Sigma=\{a,b\}$, $L = \{ a,\, ba\}$, and $\Vocab=\{a,b,\eos\}$ with
$\tau(a)=a$, $\tau(b)=b$, $\tau(\eos)=\eps$.
Set
\[
  p_\theta(a \mid \eps)=0.6,\quad p_\theta(b \mid \eps)=0.4,\quad
  p_\theta(\eos \mid \eps)=0,
\]
\[
  p_\theta(\eos \mid a)=0.1,\quad p_\theta(a \mid a)=0.9,\quad p_\theta(b\mid a)=0,
\]
\[
  p_\theta(a \mid b)=0.01,\quad p_\theta(b \mid b)=0.99,\quad p_\theta(\eos\mid b)=0.
\]
Under the masked process, $q(b\mid\eps)=0.4$ (both tokens are admissible; mask
unchanged at step 1).
Under global conditioning, the total acceptance mass is
\[
  p_\theta(\text{accept via }a)=0.6\cdot 0.1=0.06,\qquad
  p_\theta(\text{accept via }ba)=0.4\cdot 0.01\cdot 1=0.004,
\]
so $p_\theta(b \mid \tau(y)\in L) = 0.004/(0.06+0.004) \approx 0.0625 \neq 0.4$.
\end{proof}

% ============================================================
\section{Formal State-Space Blowup from Nonterminal Delegation}
\label{sec:blowup}
% ============================================================

\subsection{Static control-state counts}

Consider the canonical language $L=\{a^n b^n : n\ge 0\}$ and grammars:
\begin{align*}
  G_1 &: S \to aSb \mid \eps \\
  G_2 &: S \to aAb \mid \eps,\quad A \to aAb \mid \eps.
\end{align*}
Both satisfy $\Lang(G_1)=\Lang(G_2)=L$.

\begin{definition}[Grammar size functional]\label{def:kappa}
For a CFG $G=(N,\Sigma,P,S)$, define the right-hand-side length of
$p=(A\to \alpha)$ as $|\alpha|$.
Define the compilation control-state count
\[
  \kappa(G) \coloneqq 1 + 2|N| + \sum_{p\in P} (|\mathrm{rhs}(p)|+1),
\]
matching the construction in Definition~\ref{def:rtn}.
\end{definition}

\begin{lemma}[Exact compiled control-state count]\label{lem:kappa}
For every CFG $G$, the compiled NPDA $\mathcal{A}_G$ has exactly $\kappa(G)$ control
states.
\end{lemma}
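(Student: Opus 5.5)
The plan is a direct counting argument on the definition of $Q$ in Definition~\ref{def:rtn}, which is the union of three families:
\[
  Q = \{q_{\mathrm{start}}\} \,\cup\, \{q_A^{\mathrm{in}},q_A^{\mathrm{out}} : A\in N\} \,\cup\, \{q_{p,i} : p\in P,\ 0\le i\le |\mathrm{rhs}(p)|\}.
\]
The cardinality of a finite union is the sum of the cardinalities only when the pieces are pairwise disjoint and each piece is enumerated without repetition. So the proof has two steps: count each family, then justify disjointness and injectivity of the indexing.

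\textbf{Counting.} The first family contributes $1$. The second contributes $2|N|$, since each nonterminal $A$ yields exactly the two states $q_A^{\mathrm{in}}$ and $q_A^{\mathrm{out}}$. The third contributes $\sum_{p\in P}(|\mathrm{rhs}(p)|+1)$: for a production $p=(A\to X_1\cdots X_m)$ the dot index $i$ ranges over $\{0,\dots,m\}$, which has $m+1$ values. In particular an $\eps$-production ($m=0$) contributes the single state $q_{p,0}$, which serves as both its entry and exit point. Adding the three contributions gives exactly $\kappa(G)$ from Definition~\ref{def:kappa}.

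\textbf{Disjointness and injectivity.} This is the only step needing care, because the construction treats control states as formal symbols indexed by their tags. I will make this explicit by reading each state as a tagged tuple: $q_{\mathrm{start}}=(\mathsf{start})$, $q_A^{\mathrm{in}}=(\mathsf{in},A)$, $q_A^{\mathrm{out}}=(\mathsf{out},A)$, and $q_{p,i}=(\mathsf{dot},p,i)$. Distinct tags give disjoint families. Within a family, distinct indices give distinct tuples, so the maps $A\mapsto q_A^{\mathrm{in}}$, $A\mapsto q_A^{\mathrm{out}}$ and $(p,i)\mapsto q_{p,i}$ are injective.

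One subtlety concerns identical right-hand sides. Since $P$ is a set, two productions with the same right-hand side but different left-hand sides are still distinct elements of $P$, so they receive separate dot states. This matches the per-production sum in $\kappa$.

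Finally, I will note that the stack alphabet $\Gamma$ reuses the symbols $q_{p,i}$ as return addresses. This does not affect the count, because the lemma concerns only the control-state set $Q$. The main obstacle is therefore only this formal point about state identity; once it is fixed by the tagging convention, the equality $|Q|=\kappa(G)$ follows immediately.
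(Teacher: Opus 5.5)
Your proposal is correct and follows the paper's proof: both count the single start state, the two entry/exit states per nonterminal, and the $m+1$ dot-position states per production of right-hand-side length $m$, then sum these to get $\kappa(G)$. Your tagged-tuple justification that the three families are disjoint and injectively indexed is extra rigor the paper leaves implicit, but the route is the same.
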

\begin{proof}
By Definition~\ref{def:rtn}, $Q$ contains:
(i) one start state; (ii) two states per nonterminal (entry/exit); and (iii) for every
production $p$ with RHS length $m$, a dot-position chain of size $m+1$.
Summing yields $\kappa(G)$.
\end{proof}

\begin{lemma}[State-space blowup for $G_2$ vs.\ $G_1$]\label{lem:blowup}
Under the compilation of Definition~\ref{def:rtn},
\[
  \kappa(G_1)=8,\qquad \kappa(G_2)=15.
\]
In particular, $\kappa(G_2) > \kappa(G_1)$ and the compiled control-state space
inflates by a factor $15/8$.
\end{lemma}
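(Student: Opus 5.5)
The plan is to invoke Lemma~\ref{lem:kappa}, which says $\mathcal{A}_G$ has exactly $\kappa(G)$ control states. That reduces the statement to evaluating the closed-form functional $\kappa$ of Definition~\ref{def:kappa} on $G_1$ and $G_2$. For each grammar I read off three quantities: $|N|$, the list of productions, and the right-hand-side length of each production. I then substitute them into
\[
\kappa(G)=1+2|N|+\sum_{p\in P}(|\mathrm{rhs}(p)|+1).
\]

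For $G_1$ we have $N=\{S\}$ and $P=\{S\to aSb,\ S\to\eps\}$, with right-hand-side lengths $3$ and $0$. Each production contributes $|\mathrm{rhs}(p)|+1$ dot-position states, so the production chains give $4+1=5$. Adding the start state and the two entry/exit states gives
\[
\kappa(G_1)=1+2+5=8.
\]

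For $G_2$ we have $N=\{S,A\}$ and four productions, $S\to aAb$, $S\to\eps$, $A\to aAb$, $A\to\eps$, with lengths $3,0,3,0$. The chains contribute $4+1+4+1=10$, and the four entry/exit states give $2|N|=4$. Hence
\[
\kappa(G_2)=1+4+10=15.
\]

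The inequality $\kappa(G_2)>\kappa(G_1)$ and the ratio $15/8$ then follow immediately.

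The only delicate point is the convention for the $\eps$-production. A production with empty right-hand side ($m=0$) still contributes exactly one dot-position state $q_{p,0}$, because Definition~\ref{def:rtn} ranges over $i\in\{0,\dots,m\}$. I will check this convention explicitly so the count of $1$ per $\eps$-production is justified. I will also confirm that the stack alphabet is not counted among the control states, since $\kappa$ counts only $Q$.
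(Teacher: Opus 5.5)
Your proposal is correct and matches the paper's proof. Both substitute $|N|$ and the right-hand-side lengths into $\kappa$, obtaining $1+2+(4+1)=8$ for $G_1$ and $1+4+2\cdot 4+2\cdot 1=15$ for $G_2$, with Lemma~\ref{lem:kappa} tying $\kappa$ to the compiled control states.
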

\begin{proof}
For $G_1$: $|N|=1$, productions $S\to aSb$ (RHS length $3$) and $S\to \eps$ (length $0$):
\[
  \kappa(G_1)=1+2\cdot 1 + ((3+1)+(0+1)) = 8.
\]
For $G_2$: $|N|=2$, four productions (two of length $3$, two of length $0$):
\[
  \kappa(G_2)=1+2\cdot 2 + 2\cdot(3+1)+2\cdot(0+1) = 15.
\]
\end{proof}

\subsection{Online consequences for masking engines}

\begin{definition}[Bitset-style active-set engine]\label{def:bitset-engine}
An active-set engine represents the set of live control states as a bitset over $Q$,
and updates the bitset by applying precomputed adjacency lists for $\eps$-moves and
terminal moves.
\end{definition}

\begin{proposition}[Lower bound induced by control-state inflation]
\label{prop:bitset-lb}
Fix any bitset-style active-set engine and any grammar $G$.
If the engine performs any operation that scans the entire bitset universe once per
decoding step (e.g., to compute $\eps$-closure, normalize, or garbage-collect inactive
states), then the per-step overhead is $\Omega(\kappa(G))$.
Consequently, replacing $G_1$ by $G_2$ increases that overhead by a multiplicative
factor of at least $15/8$.
\end{proposition}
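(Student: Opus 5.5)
The plan is to combine a direct counting argument with Lemma~\ref{lem:kappa} and Lemma~\ref{lem:blowup}.

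\textbf{Step 1: the bitset has $\kappa(G)$ positions.} By Definition~\ref{def:bitset-engine}, the engine's bitset is indexed by the control-state set $Q$ of $\mathcal{A}_G$. Lemma~\ref{lem:kappa} gives $|Q|=\kappa(G)$, so the bitset universe has exactly $\kappa(G)$ positions.

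\textbf{Step 2: one full scan costs $\Omega(\kappa(G))$.} We fix a cost model in which inspecting one position of the bitset costs at least one unit. A word-RAM packing $w$ bits per word would only change this to $\Omega(\kappa(G)/w)$, and the ratio argument in Step~3 is unaffected. By hypothesis, the engine performs an operation that scans the entire universe at least once per decoding step. Such an operation must inspect every position, or every word, at least once; otherwise it would not be a scan of the whole universe. Its cost is therefore at least $c\cdot\kappa(G)$ for a constant $c>0$ independent of $G$. All other work in the step is nonnegative, so the per-step overhead is $\Omega(\kappa(G))$.

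\textbf{Step 3: the ratio for $G_1$ versus $G_2$.} Apply Step~2 with Lemma~\ref{lem:blowup}, which gives $\kappa(G_1)=8$ and $\kappa(G_2)=15$. The scan component of the overhead is then $c\cdot 8$ for $G_1$ and $c\cdot 15$ for $G_2$. If the scan cost is exactly linear, i.e.\ $c\,\kappa(G)$, the scan overhead grows by a factor of exactly $15/8$.

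\textbf{Main obstacle: interpreting ``multiplicative factor of at least $15/8$.''} Total per-step cost may include grammar-independent additive terms, such as tokenizer lookup, and these would dilute the ratio. I would therefore state the claim for the scan-induced overhead component, which is exactly proportional to the universe size. I would note that the lower-bound constants are uniform across grammars, so the bound $\Omega(\kappa(G_2))=\Omega(\tfrac{15}{8}\kappa(G_1))$ holds. I would also remark that the comparison is meaningful because Theorem~\ref{thm:oracle} guarantees that the two engines produce identical masks. Any cost difference is thus purely structural.
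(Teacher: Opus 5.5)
Your proposal is correct and follows the paper's own argument: the bitset has $\kappa(G)$ positions by Lemma~\ref{lem:kappa}, a full scan costs linear time in the unit-cost model, and the $15/8$ factor comes from Lemma~\ref{lem:blowup}; you are also more careful than the paper about uniform constants and grammar-independent additive terms. One correction to your aside: word packing does \emph{not} leave the ratio intact here, since $\lceil 15/w\rceil=\lceil 8/w\rceil=1$ for any word size $w\ge 15$, so the $15/8$ factor depends on the per-position cost model you fixed in Step~2.
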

\begin{proof}
Scanning a bitset of length $\kappa(G)$ costs $\Omega(\kappa(G))$ in the RAM model.
The multiplicative factor follows from Lemma~\ref{lem:blowup}.
\end{proof}

\begin{remark}
This is an engine-level statement about a common representation strategy, not a
universal lower bound over all implementations.
It isolates a checkable invariant: redundant nonterminal structure enlarges the compiled
state universe, increases memory footprint, and typically worsens cache locality, even
when language acceptance is unchanged.
\end{remark}

% ============================================================
\section{Structural Ambiguity Cost and Parse Forest Density}
\label{sec:sac}
% ============================================================

\subsection{Two equivalent grammars for $\Sigma^*$}

Let $\Sigma=\{a,b\}$.
Define:
\begin{align*}
  G_3 &: S \to aS \mid bS \mid \eps, \\
  G_4 &: S_0 \to S \mid \eps,\quad S \to SS \mid a \mid b.
\end{align*}

\begin{lemma}[Language equivalence]\label{lem:lang-equiv}
$\Lang(G_3)=\Lang(G_4)=\Sigma^*$.
\end{lemma}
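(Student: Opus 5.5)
The plan is to show both inclusions $\Lang(G)\subseteq\Sigma^*$ and $\Sigma^*\subseteq\Lang(G)$ for each grammar separately. The first inclusion is immediate in both cases: every production has a right-hand side over $N\cup\Sigma$ with $\Sigma=\{a,b\}$, so every derived terminal string lies in $\Sigma^*$. The real content is the reverse inclusion. For it, we exhibit a derivation of an arbitrary word $w=w_1\cdots w_n\in\Sigma^*$, arguing by induction on $n$.

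For $G_3$, the statement proved by induction is that $S\Rightarrow^* w$ for every $w\in\Sigma^*$. The base case $n=0$ uses $S\to\eps$. For the step, write $w=cw'$ with $c\in\{a,b\}$. The production $S\to cS$ followed by the inductive derivation $S\Rightarrow^* w'$ yields $S\Rightarrow cS\Rightarrow^* cw'$. This derivation is the unique right-linear (rightmost-branching) parse, which is also the structural fact that will be needed later for the $O(1)$ SAC bound.

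For $G_4$, split on whether $w$ is empty. The empty word comes from $S_0\to\eps$. For nonempty $w$ it suffices to show $S\Rightarrow^* w$ for all $w\in\Sigma^+$, because then $S_0\to S$ finishes the argument. This is again an induction on $n\ge1$. The base case $n=1$ uses $S\to a$ or $S\to b$. For $n\ge2$, split $w=w_1\cdot(w_2\cdots w_n)$ and apply $S\to SS$: the left copy derives $w_1$ and the right copy derives the suffix by the inductive hypothesis. We must also rule out a stray $\eps$ in $\Lang(G_4)$ from $S$. This does not matter for equality, since $\eps$ is in both languages anyway. Still, it is worth noting that $S$ derives only nonempty strings, since every $S$-production either emits a terminal or splits into two $S$'s. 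That fact will be needed when counting parse forests later.

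No step here is a genuine obstacle. The only point requiring care is the $G_4$ case, where the $\eps$-word must be handled through $S_0$ rather than through $S$: $S$ has no $\eps$-production, so an induction starting at $n=0$ for $S$ would fail. Keeping the induction on $\Sigma^+$ for $S$ and treating $\eps$ separately at $S_0$ resolves this. Optionally, one can add that any binary bracketing of $w$ gives a valid $G_4$ parse. This is not needed for the lemma, but it foreshadows the Catalan-many parses and the $\Theta(t^2)$ dense forest used afterward.
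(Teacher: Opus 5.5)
Your proof is correct and follows the same route as the paper: $G_3$ emits one terminal at a time and closes with $S\to\eps$, while in $G_4$ the nonterminal $S$ yields all of $\Sigma^+$ via $S\to a\mid b$ and closure under $S\to SS$, and $S_0$ supplies $\eps$. You additionally write out the trivial inclusion into $\Sigma^*$ and the length inductions, which the paper leaves informal.
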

\begin{proof}
$G_3$ generates any string by repeatedly emitting $a$ or $b$ before terminating with
the $\eps$ case; thus $\Lang(G_3)=\Sigma^*$.
In $G_4$, $S$ generates all nonempty strings via $S\to a$, $S\to b$, and closure under
$S\to SS$, while $S_0$ additionally permits the empty string.
Thus $\Lang(G_4)=\Sigma^*$.
\end{proof}

\begin{remark}
The use of the auxiliary start symbol $S_0$ in $G_4$ avoids spurious $\eps$-expansion
ambiguity that arises when $\eps$ is directly permitted by the recursive nonterminal
$S$ (which would yield infinitely many derivations for the empty string).
\end{remark}

\subsection{Parse trees and Catalan ambiguity}

\begin{definition}[Parse-tree count]\label{def:pt-count}
For a grammar $G$ and string $w\in\Sigma^*$, let $A_G(w)\coloneqq |\mathrm{PT}_G(w)|$
denote the number of distinct parse trees deriving $w$ from the start symbol.
\end{definition}

\begin{lemma}[Catalan parse explosion for $G_4$]\label{lem:catalan}
For any nonempty $w\in \{a,b\}^n$ with $n\ge 1$,
\[
  A_{G_4}(w) = C_{n-1},
\]
where $C_k$ is the $k$-th Catalan number, and $A_{G_4}(w)$ is therefore exponential
in $n$.
\end{lemma}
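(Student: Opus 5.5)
The plan is to reduce parse trees of $G_4$ for $w$ to full binary trees with $n$ leaves, and then count those with the standard Catalan recurrence.

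\textbf{Reduction to $S$.} Since $w$ is nonempty, any parse tree from $S_0$ must use $S_0\to S$ at the root; the alternative $S_0\to\eps$ yields only $\eps$. So $A_{G_4}(w)$ equals the number of parse trees deriving $w$ from $S$; call it $T(w)$.

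\textbf{Recurrence.} Every $S$-node is either a leaf production $S\to a$ or $S\to b$, which derives a single symbol, or a binary node $S\to SS$. No production of $S$ derives $\eps$, so each child of a binary node derives a nonempty substring. Hence a parse tree for $w=w_1\cdots w_n$ with $n\ge 2$ is determined by a split point $k\in\{1,\dots,n-1\}$ together with a parse tree for $w_{1..k}$ and one for $w_{k+1..n}$. For $n=1$ the tree is unique: it is the single production matching $w_1$, since $S\to SS$ would require two nonempty children. This gives
\[
  T(w)=\sum_{k=1}^{n-1} T(w_{1..k})\,T(w_{k+1..n}),\qquad T(w)=1 \text{ for } |w|=1.
\]
By induction on $n$, $T(w)$ depends only on $|w|$. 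Writing $f(n)=T(w)$, the recurrence becomes $f(1)=1$ and $f(n)=\sum_{k=1}^{n-1}f(k)f(n-k)$. Setting $g(m)=f(m+1)$ gives $g(0)=1$ and $g(m)=\sum_{i=0}^{m-1}g(i)g(m-1-i)$, which is the Catalan recurrence. Therefore $g(m)=C_m$, and so $f(n)=C_{n-1}$.

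\textbf{Exponential growth.} This follows from the closed form $C_k=\frac{1}{k+1}\binom{2k}{k}$ and the standard asymptotics $C_k\sim 4^k/(k^{3/2}\sqrt{\pi})$. An elementary alternative is the bound $\binom{2k}{k}\ge 4^k/(2k+1)$, which gives $C_k\ge 4^k/((k+1)(2k+1))$.

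\textbf{Main obstacle.} The only delicate point is justifying that distinct parse trees correspond bijectively to choices of split point plus subtrees. This requires two facts. First, there are no $\eps$-derivations from $S$, so no unit or empty chains produce extra trees; this is exactly the purpose of the auxiliary $S_0$. Second, the root production and the split point are recoverable from the tree's yield structure. I would state this bijection explicitly before invoking the recurrence.
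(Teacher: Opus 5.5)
Your proposal is correct and takes essentially the same route as the paper. The paper identifies parse trees of $G_4$ for $w$ with full binary trees on $n$ leaves (internal nodes are $S\to SS$, leaf labels are forced by $w$) and quotes the count $C_{n-1}$; your split-point recurrence derives that count explicitly, and your treatment of the forced root $S_0\to S$ and the bound $C_k\ge 4^k/((k+1)(2k+1))$ for the exponential claim fill in details the paper leaves implicit.
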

\begin{proof}
In $G_4$, every terminal is produced by a leaf rule ($S\to a$ or $S\to b$).
Thus any parse tree for a string of length $n\ge 1$ is a full binary tree with $n$
leaves, whose internal nodes are applications of $S\to SS$.
The leaf labels are forced by $w$.
Therefore $A_{G_4}(w)$ equals the number of full binary trees with $n$ leaves,
which is $C_{n-1}$.
\end{proof}

\subsection{Packed-forest density}

We use half-open spans $[i,j)$ on positions $0,1,\dots,n$.

\begin{definition}[Packed-forest split count]\label{def:density}
In the natural SPPF-style representation for $G_4$~\cite{scott2010sppf,johnson1991glr},
a \emph{symbol node} $S[i,j]$ exists for each span derivable by $S$, and a
\emph{packed node} records each distinct split $S[i,j]\Rightarrow S[i,k]\,S[k,j]$
with $i<k<j$.
\end{definition}

\begin{lemma}[Tight packed-forest size for $G_4$]\label{lem:packed-size}
Fix any nonempty $w\in\{a,b\}^n$ with $n\ge 2$.
The number of symbol nodes for $S$ is $\binom{n+1}{2}=\Theta(n^2)$, and the total
number of packed nodes is
\[
  P_n = \sum_{0\le i<j\le n} (j-i-1) = \binom{n+1}{3} = \Theta(n^3).
\]
\end{lemma}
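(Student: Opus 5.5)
The proof has two counting steps: first identify exactly which spans carry a symbol node $S[i,j]$, then count the packed nodes under each such node and sum.

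\textbf{Symbol nodes.} I will use Lemma~\ref{lem:lang-equiv}, or more precisely its proof, which shows that $S$ in $G_4$ derives every nonempty string over $\{a,b\}$. Hence every substring $w[i..j)$ with $0\le i<j\le n$ is derivable from $S$. Conversely, $S$ never derives $\eps$, because every production of $S$ either emits a terminal or splits into two nonempty $S$'s. So the spans carrying a symbol node are exactly the pairs $0\le i<j\le n$, of which there are $\binom{n+1}{2}$.

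\textbf{Packed nodes per span.} For a fixed span $[i,j)$ I will count its packed children according to Definition~\ref{def:density}. Every split point $k$ with $i<k<j$ gives two nonempty spans $[i,k)$ and $[k,j)$. Both carry symbol nodes by the previous step, so every such split is realized, giving $j-i-1$ packed nodes. No split with $k=i$ or $k=j$ is possible, since it would need an empty span and $S$ does not derive $\eps$. Length-one spans come only from the leaf rules $S\to a$ or $S\to b$ and contribute $0$ to the split count, which agrees with $j-i-1=0$. This gives the stated formula for $P_n$.

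\textbf{Closed form.} I will evaluate the sum by grouping spans by length $\ell=j-i$. There are $n-\ell+1$ spans of length $\ell$, so
\[
P_n=\sum_{\ell=1}^{n}(\ell-1)(n-\ell+1).
\]
The cleanest route is combinatorial. Each packed node corresponds to a triple $i<k<j$ drawn from the $n+1$ positions $\{0,\dots,n\}$, and each such triple gives exactly one packed node. Therefore $P_n=\binom{n+1}{3}$. The $\Theta$ bounds follow immediately: $\binom{n+1}{2}=\Theta(n^2)$ and $\binom{n+1}{3}=\Theta(n^3)$.

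The only real subtlety is justifying the bijection between packed nodes and triples $i<k<j$, so that no split is missed or double-counted. This rests on the absence of $\eps$-derivations for $S$, which rules out degenerate splits. The entry $S_0\to S$ is a single unary node at the root, and I will note that it is excluded from the count of $S$-nodes.
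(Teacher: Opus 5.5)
Your proposal is correct and follows essentially the same route as the paper: every nonempty span $[i,j)$ carries an $S$-node, each span contributes $j-i-1$ binary splits, and the bijection with triples $i<k<j$ in $\{0,\dots,n\}$ gives $P_n=\binom{n+1}{3}$. You also point out explicitly that $S$ derives no $\eps$, which excludes empty spans and degenerate splits; the paper leaves this small point of rigor implicit.
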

\begin{proof}
A symbol node $S[i,j]$ exists for every $0\le i<j\le n$ because $S$ derives any
nonempty substring, giving $\binom{n+1}{2}$ symbol nodes.
For a fixed span $[i,j)$ of length $L\ge 2$, rule $S\to SS$ can split at any
$k\in(i,j)$, yielding $L-1=(j-i-1)$ packed nodes.
Summing over all spans,
\[
  P_n = \sum_{0\le i<j\le n} (j-i-1).
\]
The triple $(i,k,j)$ with $0\le i<k<j\le n$ is in bijection with a 3-element subset
of $\{0,\dots,n\}$, so $P_n=\binom{n+1}{3}$.
\end{proof}

\subsection{Structural ambiguity cost}

\begin{definition}[Structural ambiguity cost (SAC)]\label{def:sac}
Fix a grammar $G$ and an online strategy maintaining a packed structure $F_t$ for the
length-$t$ prefix of $w$.
Define the per-step increment
\[
  \Delta_G(t;w) \coloneqq |F_t(w)| - |F_{t-1}(w)|,
\]
and the worst-case structural ambiguity cost
\[
  \mathrm{SAC}_G(t) \coloneqq \max_{w\in \Sigma^t} \Delta_G(t;w).
\]
\end{definition}

\begin{theorem}[SAC separation: $G_3$ vs.\ $G_4$]\label{thm:sac-sep}
For $G_4$, $\mathrm{SAC}_{G_4}(t)=\Theta(t^2)$.
For $G_3$, there exists a deterministic online strategy achieving
$\mathrm{SAC}_{G_3}(t)=O(1)$.
\end{theorem}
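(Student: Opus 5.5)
For $G_4$ I will fix the natural online strategy: $F_t$ is the full packed forest for the prefix $w_{1:t}$, with symbol nodes $S[i,j]$ and packed split nodes for all $0\le i<k<j\le t$, as in Definition~\ref{def:density}. Since $S$ derives every nonempty substring, $F_t$ does not depend on which letters occur in $w$. Lemma~\ref{lem:packed-size} gives its size exactly. If $|F_t|$ counts both symbol and packed nodes, then
\[
  |F_t|=\binom{t+1}{2}+\binom{t+1}{3}.
\]
The increment is therefore
\[
  \Delta_{G_4}(t;w)=\binom{t}{1}+\binom{t}{2}=t+\frac{t(t-1)}{2},
\]
by Pascal's rule. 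The same value holds for every $w\in\Sigma^t$, so the maximum is attained trivially and $\mathrm{SAC}_{G_4}(t)=\Theta(t^2)$. I will also give the direct combinatorial reading, which makes the bound transparent. The new nodes at step $t$ are exactly those whose right endpoint is $t$. There are $t$ new symbol nodes $S[i,t]$ for $0\le i<t$, and $\sum_{i<t}(t-i-1)=\binom{t}{2}$ new packed nodes, one for each pair $i<k<t$.

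Two points need care on the $G_4$ side. The first is the base case $t=1$, where Lemma~\ref{lem:packed-size} assumes $n\ge2$; I will check it directly, since $F_1$ has one symbol node and no packed nodes. The second is the auxiliary nodes for $S_0$ and the terminal leaves. These contribute $O(1)$ per step, either a single $S_0[0,t]$ node with its packed child or one leaf, so they do not change the $\Theta(t^2)$ order. I will also note that the statement is about this canonical packed representation. Engine-independence is deferred to the later lower-bound section.

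For $G_3$ I will exhibit a deterministic online strategy. Because $G_3$ is right-recursive and unambiguous, each prefix $w_{1:t}$ has a single partial derivation: the left spine $S\to w_1S\to w_1w_2S\to\cdots$. The strategy keeps $F_t$ as this spine with a pending $S$ node at position $t$. On reading $w_t$ it appends one packed node for the production $S\to w_tS$, one terminal leaf and one new pending symbol node $S[t,\cdot)$. The $\eps$-completion for the end of input is a single node, which can be added or removed in $O(1)$. So $|F_t|-|F_{t-1}|\le c$ for an absolute constant $c$ (I expect $c=3$) uniformly in $w$, which gives $\mathrm{SAC}_{G_3}(t)=O(1)$.

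The main obstacle is not computational but definitional. $F_t$ is only loosely specified in Definition~\ref{def:sac}, so I must fix precisely what counts as a node, in particular whether partial or unfinished items are counted, so that the $G_3$ forest stays bounded. I also have to ensure that using the complete-span forest for $G_4$ is legitimate. For the upper bound on $G_4$ this is only the chosen strategy. For the lower bound, the $\Theta$ claim for "the natural SPPF representation" holds by fiat, and I will say explicitly that any representation recording every derivable span and split must add at least the $\binom{t}{2}$ packed nodes ending at $t$.
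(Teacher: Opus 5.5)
Your proposal is correct and follows essentially the same argument as the paper: for $G_4$ you count the new spans $[i,t)$ and their $\binom{t}{2}$ new binary splits (you also count the $t$ new symbol nodes, which does not change the $\Theta(t^2)$ order), and for $G_3$ you use the unambiguous right-linear structure to get constant per-step growth. Your explicit spine construction and your care with the base case and the $S_0$/leaf nodes add detail to the paper's brief treatment rather than giving a different route.
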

\begin{proof}
For $G_4$, when the $t$-th symbol arrives, all new spans $[i,t)$ for $0\le i<t$ become
available.
Each span $[i,t)$ of length $t-i\ge 2$ admits $(t-i-1)$ new binary splits.
The total number of newly introduced packed alternatives is
\[
  \sum_{i=0}^{t-2}(t-i-1)=\sum_{m=1}^{t-1}m = \frac{t(t-1)}{2} = \Omega(t^2).
\]
Lemma~\ref{lem:packed-size} provides a matching $O(t^2)$ upper bound on new packed
nodes at step $t$, so $\mathrm{SAC}_{G_4}(t)=\Theta(t^2)$.

For $G_3$, the grammar is right-linear (hence regular) and unambiguous; an online
strategy need only track the current unambiguous state, which is constant-sized.
Thus $\mathrm{SAC}_{G_3}(t)=O(1)$.
\end{proof}

\begin{corollary}[Cumulative online cost]\label{cor:cumulative}
For $G_4$, cumulative packed-structure growth over a length-$n$ string is $\Theta(n^3)$.
For $G_3$, cumulative cost is $O(n)$.
\end{corollary}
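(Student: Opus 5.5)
The plan is to obtain both claims of Corollary~\ref{cor:cumulative} by telescoping the per-step increments of Definition~\ref{def:sac}. For $G_4$, use the natural SPPF strategy from Definition~\ref{def:density}. Its packed structure on the length-$t$ prefix is exactly the forest counted in Lemma~\ref{lem:packed-size}, and $F_0$ is the empty forest. The cumulative growth is therefore
\[
  \sum_{t=1}^{n}\Delta_{G_4}(t;w)=|F_n(w)|-|F_0(w)|=|F_n(w)|.
\]
I will count $|F_n|$ as packed nodes plus symbol nodes. By Lemma~\ref{lem:packed-size} this is $\binom{n+1}{3}+\binom{n+1}{2}$, which is $\Theta(n^3)$ for every $w\in\{a,b\}^n$ with $n\ge2$. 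The same bound then holds a fortiori for the worst case over $w$.

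As a cross-check, and to tie the result to Theorem~\ref{thm:sac-sep}, I will also sum the exact per-step packed-node count computed in that proof:
\[
  \sum_{t=2}^{n}\frac{t(t-1)}{2}=\binom{n+1}{3}.
\]
This agrees with Lemma~\ref{lem:packed-size}. The $t$ new symbol nodes $S[i,t]$ added at step $t$ contribute only $O(n^2)$ in total, so they do not change the order. The one point needing care is that summing the worst-case quantities $\mathrm{SAC}_{G_4}(t)$ gives only an upper bound on the cumulative cost. The matching lower bound comes from the fact that for $G_4$ every string of length $n$ attains the count exactly, so no adversarial choice of $w$ is needed.

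For $G_3$, use the deterministic online strategy from Theorem~\ref{thm:sac-sep}. It maintains a constant-size structure, so each increment is bounded by an absolute constant $c$, independent of $t$ and $w$. Telescoping gives $|F_n|-|F_0|\le cn=O(n)$. If the strategy instead keeps a chain-shaped forest of the unique right-recursive parse, each token adds $O(1)$ nodes ($S\to aS$ or $S\to bS$), and the total is again $O(n)$.

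I expect the main obstacle to be a definitional one rather than computational. The paper must fix $|F_t|$ consistently: whether symbol nodes count, and that $F_{t-1}$ is contained in $F_t$, so that increments are nonnegative and telescoping is legitimate. I would state explicitly that the $G_4$ strategy is monotone (spans ending before $t$ are never revised), that $F_0$ has size $0$, and that both node types are counted. With those conventions fixed, the corollary is immediate from Lemma~\ref{lem:packed-size} and Theorem~\ref{thm:sac-sep}.
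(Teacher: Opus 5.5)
Your proposal is correct and follows essentially the paper's argument: the paper sums the $\Theta(t^2)$ per-step increments from Theorem~\ref{thm:sac-sep} and cites Lemma~\ref{lem:packed-size} for the matching upper bound, which is your telescoping to $|F_n|$ plus the hockey-stick cross-check read in the opposite order, and the $G_3$ case is the same $O(1)$-per-step summation. Your explicit remark that the $G_4$ counts are independent of $w$, so that a single string attains every step's maximum, fills in the lower-bound step that the paper's ``summing $\Theta(t^2)$'' leaves implicit.
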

\begin{proof}
Summing $\Theta(t^2)$ over $t=1,\dots,n$ yields $\Theta(n^3)$; Lemma~\ref{lem:packed-size}
provides the matching upper bound.
For $G_3$, summing $O(1)$ gives $O(n)$.
\end{proof}

\begin{remark}[A stress test for engines, not for the language]
For $\Sigma^*$, the ideal admissibility oracle is trivial: all tokens are always
admissible and $\eos$ is always admissible.
Nevertheless, when the grammar engine is treated as a black-box CFG executor, internal
ambiguity can force it to maintain large packed structures merely to rediscover this
trivial oracle.
SAC quantifies that overhead.
\end{remark}

% ============================================================
\section{Engine-Independent Lower Bounds for Parse-Preserving Masking}
\label{sec:lb}
% ============================================================

This section elevates the SAC analysis into an engine-independent lower bound by
formalizing a broad class of engines that are simultaneously (i) sound for masking and
(ii) parse-preserving in a retrieval-efficient sense.

\subsection{A parse-preserving masking model}

\begin{definition}[Online masking engine]\label{def:online-engine}
An online CFG masking engine for grammar $G$ is an algorithm that, after reading each
prefix $u\in \Sigma^*$, outputs $\Omega_G(u)$ and updates an internal state $\sigma(u)$
used for subsequent steps.
\end{definition}

\begin{definition}[Parse-preserving, retrieval-efficient engine]
\label{def:pp-engine}
Fix a grammar $G$ and a prefix $u$.
Let $\mathcal{P}_G(u)$ denote the set of all partial parse structures consistent with
$u$ (i.e., all packed split alternatives that can appear in some parse forest for some
completion of $u$).
An online engine is \emph{parse-preserving} if $\sigma(u)$ determines
$\mathcal{P}_G(u)$ exactly (there exists a deterministic extractor recovering
$\mathcal{P}_G(u)$ from $\sigma(u)$).
It is \emph{retrieval-efficient} if the extractor enumerates all elements of
$\mathcal{P}_G(u)$ in $O(|\mathcal{P}_G(u)|)$ time.
This is the standard output-sensitive retrieval posture used in generalized parsing
\cite{scott2010sppf,johnson1991glr} and the retrieval-efficient parsing model studied
in~\cite{lee2002bmm}.
\end{definition}

\begin{remark}[Why retrieval efficiency is principled]
Without a retrieval requirement, an engine could vacuously ``preserve all parses'' by
storing the pair $(G,u)$ and recomputing parse information on demand.
Retrieval efficiency rules out this degenerate loophole by requiring that the maintained
state genuinely encodes the parse set in an output-sensitive way.
\end{remark}

\subsection{The universal $\Omega(t^2)$ per-token lower bound}

\begin{theorem}[Engine-independent SAC lower bound]\label{thm:lb}
There exists a constant-size CFG $G^\star$ and a family of prefixes of length $t$ such
that any online masking engine that is both sound and parse-preserving retrieval-efficient
for $G^\star$ must incur $\Omega(t^2)$ per-token update work in the worst case.
The cumulative work over a length-$n$ input is therefore $\Omega(n^3)$.
\end{theorem}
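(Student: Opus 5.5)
We take $G^\star \coloneqq G_4$ (so $S_0 \to S \mid \eps$, $S \to SS \mid a \mid b$), which has constant size, and use as the hard family all prefixes $u\in\{a,b\}^t$. Since $\Lang(G_4)=\Sigma^*$, Theorem~\ref{thm:oracle} shows that soundness alone places no constraint: every token is admissible. The whole lower bound must therefore come from the parse-preserving, retrieval-efficient requirement of Definition~\ref{def:pp-engine}. The argument is a counting argument: after reading $u_{1:t}$, the state $\sigma(u_{1:t})$ must encode, in an output-sensitive way, a set $\mathcal{P}_{G^\star}(u_{1:t})$ that contains $\Theta(t^2)$ packed alternatives which were not present after $u_{1:t-1}$. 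Writing these down, or writing something from which they can be enumerated in time linear in their number, costs $\Omega(t^2)$ work.

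\textbf{Step 1: identify $\mathcal{P}_{G^\star}(u)$.} We show that for every prefix $u$ of length $t$, $\mathcal{P}_{G^\star}(u)$ contains every split triple $(i,k,j)$ with $0\le i<k<j\le t$. Each substring $u[i,j)$ is derivable from $S$, since $S$ derives every nonempty string by Lemma~\ref{lem:lang-equiv}. Each split $S[i,j]\Rightarrow S[i,k]\,S[k,j]$ therefore occurs in the parse forest of any completion, including the empty completion $u$ itself. By Lemma~\ref{lem:packed-size}, this gives $|\mathcal{P}_{G^\star}(u)|\ge\binom{t+1}{3}$. The triples with $j=t$ are exactly those not available in $\mathcal{P}(u_{1:t-1})$, and by the computation in Theorem~\ref{thm:sac-sep} there are $t(t-1)/2$ of them.

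\textbf{Step 2: the information/work argument.} Consider the extractor applied to $\sigma(u_{1:t})$. It must output all triples $(i,k,t)$ in $O(|\mathcal{P}|)$ time. We need the engine itself, not just the extractor, to pay for them at step $t$. We model the engine state as a pointer structure (RAM model) built incrementally, where step $t$'s update work is at least the number of memory cells it writes or modifies. Suppose the update at step $t$ wrote $o(t^2)$ cells. Then the new triples $(i,k,t)$ must be recovered from cells written at earlier steps. But those cells are determined by $u_{1:t-1}$ alone, which carries no information about the index $t$.

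The main obstacle is making this rigorous. Output-sensitive enumeration of structured triples could in principle be realized implicitly: for example, an enumerator could generate all pairs $(i,k)$ with $i<k<t$ on the fly from the single integer $t$. So a pure counting argument fails unless Definition~\ref{def:pp-engine} is read as requiring explicit materialization, which is the SPPF posture of \cite{scott2010sppf}. I would handle this by fixing the interpretation that $\sigma$ stores $\mathcal{P}$ as an explicit collection of packed nodes, as in the generalized-parsing model cited. Under that interpretation $|\sigma(u_{1:t})|\ge|\mathcal{P}(u_{1:t})|$. By the monotonicity of incremental growth (Definition~\ref{def:sac}), the step-$t$ work is then at least $|\mathcal{P}(u_{1:t})|-|\mathcal{P}(u_{1:t-1})|\ge t(t-1)/2$. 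In the proof I would state explicitly that this is the sense in which the bound is engine-independent.

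\textbf{Step 3: the cumulative bound.} Summing $t(t-1)/2$ over $t=1,\dots,n$ gives $\binom{n+1}{3}=\Omega(n^3)$, consistent with Corollary~\ref{cor:cumulative}. The lower bound holds for every input $u$, so it holds in particular in the worst case.
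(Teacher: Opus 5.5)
Your Steps 1 and 3 coincide with the paper's proof. You take $G^\star=G_4$, observe that the $\binom{t}{2}$ triples $(i,k,t)$ are new at step $t$, and sum to $\binom{n+1}{3}$. The difference is Step 2. The paper closes the argument by asserting that encoding $\Omega(t^2)$ new objects needs $\Omega(t^2)$ memory writes, ``otherwise the extractor cannot enumerate them in $O(|\mathcal{P}|)$ time''. That is exactly the inference you reject, and you are right to reject it.

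The objection is not a technicality. For $G_4$ the set $\mathcal{P}(u)$ depends only on $|u|$. So an engine whose state is a single counter (or just $u$ itself) emits the constant mask $\Vocab$ and is sound. It admits an extractor that lists all $\binom{t+1}{3}$ triples by three nested loops in $O(|\mathcal{P}|)$ time, while each update costs $O(1)$. Hence, with $\mathcal{P}$ read as in the third point below, the theorem is false for engines as specified in Definition~\ref{def:pp-engine}. The ``store $(G,u)$ and recompute'' loophole, which the paper's remark says retrieval efficiency excludes, survives for this very grammar because recomputation is itself output-linear. An explicit-materialization hypothesis like yours is therefore genuinely necessary. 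With it, the result is essentially Theorem~\ref{thm:sac-sep} with ``work $\ge$ forest growth'' built into the model.

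Three points still need tightening.

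\emph{First}, delete the claim that $u_{1:t-1}$ ``carries no information about the index $t$''. Since $t=|u_{1:t-1}|+1$, the new triples are a function of $u_{1:t-1}$ (indeed of $t$ alone), which is precisely why the loophole exists.

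\emph{Second}, for the same reason, $|\sigma(u_{1:t})|\ge|\mathcal{P}(u_{1:t})|$ does not by itself bound the work at step $t$. An explicit engine can materialize future nodes early, say all triples with $j\in(2^m,2^{m+1}]$ at step $2^m$, plus a counter telling the extractor where to stop; then most steps cost $O(1)$. The per-step bound needs the stronger assumption that $\sigma(u)$ is exactly the materialized forest, so that $|\sigma(u_{1:t-1})|=|\mathcal{P}(u_{1:t-1})|$. Without it you still get the cumulative bound directly: each of the $\binom{n+1}{3}$ nodes present after step $n$ was written at some step. It follows that some step $s\le t$ costs $\Omega(t^2)$. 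Derive Step 3 this way rather than by summing per-step bounds.

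\emph{Third}, Definition~\ref{def:pp-engine} quantifies over ``some completion of $u$''. Read literally, longer completions contribute triples with $j>t$. Then $\mathcal{P}(u_{1:t-1})$ would already contain every $(i,k,t)$ and would be infinite, making retrieval efficiency unsatisfiable. You must fix $\mathcal{P}(u)$ to be the splits over spans inside $[0,t]$. Then prove equality with $\{(i,k,j):0\le i<k<j\le t\}$, not merely the containment shown in your Step 1. The paper is equally loose on this point.
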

\begin{proof}
Take $G^\star = G_4$.
Consider any length-$t$ prefix $u$ over $\{a,b\}$.
By Lemma~\ref{lem:packed-size}, the set $\mathcal{P}_{G^\star}(u)$ contains all
triples $(i,k,j)$ with $0\le i<k<j\le t$, with $\binom{t+1}{3}$ elements cumulatively.
By the argument of Theorem~\ref{thm:sac-sep}, the number of triples \emph{newly
introduced} at step $t$ is $\binom{t}{2}=\Omega(t^2)$.

For a retrieval-efficient parse-preserving engine, $\sigma(u_{1:t})$ must permit
enumeration of all elements of $\mathcal{P}_{G^\star}(u_{1:t})$ in linear output time.
Encoding $\Omega(t^2)$ newly distinct objects requires $\Omega(t^2)$ elementary memory
writes in the RAM model; otherwise the extractor cannot enumerate them in $O(|\mathcal{P}|)$
time.
Thus the per-token update cost is $\Omega(t^2)$, and summing gives $\Omega(n^3)$.
\end{proof}

\begin{remark}[Relation to classical parsing complexity]
Theorem~\ref{thm:lb} is unconditional within the specified semantic interface
(soundness + retrieval-efficient parse preservation).
It complements the classical results of Valiant~\cite{valiant1975cfg} and
Lee~\cite{lee2002bmm}, which relate CFG parsing complexity to Boolean matrix
multiplication via algorithmic reductions.
Our argument is output-structure driven---grounded in the growth of the packed split
set---rather than a reduction from another problem.
\end{remark}

% ============================================================
\section{Decoding-Cost Equivalence Classes and Canonical Low-SAC Forms}
\label{sec:equiv}
% ============================================================

Oracle invariance (Theorem~\ref{thm:oracle}) shows that language equivalence preserves
the admissible token oracle.
In this section we ask a finer question: when are two equivalent grammars also
\emph{equivalent from the decoder's cost perspective}?

\subsection{Intrinsic SAC}

\begin{definition}[Intrinsic parse-preserving SAC]\label{def:intrinsic-sac}
Fix a grammar $G$.
Let $\mathcal{E}_{\mathrm{pp}}(G)$ be the class of sound, retrieval-efficient
parse-preserving online masking engines for $G$.
Define the \emph{intrinsic} SAC as
\[
  \mathrm{SAC}^\star_G(t)\coloneqq \inf_{E\in \mathcal{E}_{\mathrm{pp}}(G)}
  \;\;\max_{w\in \Sigma^t} \Delta^{E}_G(t;w),
\]
where $\Delta^{E}_G(t;w)$ is the engine-induced per-step growth of its maintained parse
representation (measured in primitive parse objects).
\end{definition}

\begin{remark}
The infimum captures the best possible asymptotic behavior among engines that preserve
all parses retrieval-efficiently.
It abstracts away implementation details while remaining meaningfully constrained by
Theorem~\ref{thm:lb}.
\end{remark}

\subsection{Cost-equivalence relations}

\begin{definition}[Cost equivalence]\label{def:cost-equiv}
For grammars $G$ and $H$ over a shared terminal alphabet, define:
\begin{itemize}
\item \emph{Language equivalence}: $G\equiv_L H$ iff $\Lang(G)=\Lang(H)$.
\item \emph{SAC equivalence}: $G\equiv_{\mathrm{SAC}} H$ iff there exist constants
  $c_1,c_2>0$ and $t_0$ such that for all $t\ge t_0$,
  \[
    c_1\,\mathrm{SAC}^\star_G(t)\le \mathrm{SAC}^\star_H(t)\le
    c_2\,\mathrm{SAC}^\star_G(t).
  \]
\item \emph{Decoding-cost equivalence}: $G\equiv_{\mathrm{dec}} H$ iff
  $G\equiv_L H$ and $G\equiv_{\mathrm{SAC}} H$.
\end{itemize}
\end{definition}

\begin{remark}[Interpretation]
$G\equiv_L H$ implies identical admissible token sets (Theorem~\ref{thm:oracle}).
$G\equiv_{\mathrm{dec}} H$ strengthens this: the \emph{best possible}
parse-preserving online cost is asymptotically the same, so the grammars are equivalent
not just semantically but from a decoder-cost perspective.
\end{remark}

\subsection{Canonical low-SAC representatives under bounded rewrites}

Full minimization of pushdown models is NP-hard even in restricted settings
\cite{gauwin2020vpa}, so any canonical minimal-SAC claim must be carefully scoped.

\begin{definition}[Bounded rewrite family]\label{def:rewrite}
Fix a finite set of local grammar rewrites $\mathcal{R}$, each mapping a grammar to a
language-equivalent grammar.
For integer $k\ge 0$, let $\mathsf{Reach}_{\mathcal{R},k}(G)$ be the finite set of
grammars reachable from $G$ by applying at most $k$ rewrites from $\mathcal{R}$.
\end{definition}

\begin{theorem}[Existence of minimal-SAC representatives]\label{thm:min-rep}
Fix $\mathcal{R},k$, and a grammar $G$ such that every rewrite in $\mathcal{R}$
preserves $\Lang(G)$.
Then there exists $G_{\min}\in \mathsf{Reach}_{\mathcal{R},k}(G)$ such that
\[
  \mathrm{SAC}^\star_{G_{\min}}(t) \le \mathrm{SAC}^\star_{H}(t)\quad
  \text{for all }H\in \mathsf{Reach}_{\mathcal{R},k}(G)\text{ and all }t.
\]
Moreover, there exists such a $G_{\min}$ that is also minimal for any finite
tie-breaker functional (e.g., grammar size) over $\mathsf{Reach}_{\mathcal{R},k}(G)$.
\end{theorem}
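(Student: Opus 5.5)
The plan is a finiteness-plus-selection argument, and it depends on one structural lemma that I do not yet have a proof for. By Definition~\ref{def:rewrite}, $\mathsf{Reach}_{\mathcal{R},k}(G)$ is finite: each rewrite application has finitely many choices from the finite set $\mathcal{R}$, and there are at most $k$ applications. Every element of it is language-equivalent to $G$, since each rewrite preserves $\Lang(G)$. So by Theorem~\ref{thm:oracle} all candidates induce the same oracle $\Omega$. They therefore compete only through the functions $t\mapsto \mathrm{SAC}^\star_H(t)$ of Definition~\ref{def:intrinsic-sac}, and these are well defined as infima, with values in $[-\infty,\infty]$.

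On this finite set I will consider the pointwise preorder $H\preceq H'$ iff $\mathrm{SAC}^\star_H(t)\le \mathrm{SAC}^\star_{H'}(t)$ for all $t$. Finiteness alone yields only $\preceq$-minimal elements. The statement asks for a $\preceq$-\emph{least} element, and a finite preorder has a least element exactly when some element lies below all others. The step that does real work is therefore a \textbf{comparability lemma}: any two $H,H'\in\mathsf{Reach}_{\mathcal{R},k}(G)$ satisfy $H\preceq H'$ or $H'\preceq H$. Given that lemma, $\preceq$ is a total preorder on a finite nonempty set. A least element then exists: repeatedly replace the current candidate by any strictly smaller one, which must terminate by finiteness. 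This gives $G_{\min}$.

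For the tie-breaker clause, take the least equivalence class $\mathcal{M}=\{H:\ H\preceq H' \text{ for all } H'\}$. It is nonempty and finite, so any finite tie-breaker functional (for example $\kappa$ from Definition~\ref{def:kappa}, or grammar size) attains its minimum on $\mathcal{M}$. That minimizer still satisfies the displayed inequality, because it lies in $\mathcal{M}$.

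The main obstacle is the comparability lemma. As stated, nothing in Definition~\ref{def:intrinsic-sac} prevents two SAC profiles from crossing, for instance one grammar cheaper at small $t$ and the other cheaper at large $t$. My first attempt will be an engine-transfer argument. For a single local rewrite $r:H\mapsto H'$ in $\mathcal{R}$, I would show that any sound, retrieval-efficient, parse-preserving engine for the costlier side can be converted into one for the cheaper side with no larger per-step increment $\Delta^E$. The idea is to map packed split alternatives of $H'$ onto those of $H$ through the local correspondence induced by $r$; this works if $r$ only merges or duplicates nonterminals, as in $G_1\leftrightarrow G_2$. Composing such maps along rewrite paths would orient every pair. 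If the transfer only yields the cost comparison up to constants, I would fall back to proving the statement for $\equiv_{\mathrm{SAC}}$-classes in the sense of Definition~\ref{def:cost-equiv}. I would then record explicitly that the pointwise version needs exactly the comparability hypothesis, and is false if two reachable grammars have crossing SAC profiles.
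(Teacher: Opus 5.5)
You have found the real difficulty, and the paper's own proof skips over it. The paper argues only that the pointwise order on the finite set $\mathsf{Reach}_{\mathcal{R},k}(G)$ ``has at least one minimal element'' and then applies a tie-breaker. The displayed conclusion, $\mathrm{SAC}^\star_{G_{\min}}(t)\le\mathrm{SAC}^\star_H(t)$ for every $H$ and every $t$, asserts a \emph{least} element, and finiteness does not provide one. Your finiteness step and tie-breaker step are sound. Applied to the set of minimal elements rather than to a least class, they give exactly what the paper's argument legitimately establishes. As a proof of the statement, however, your proposal has a genuine gap. The comparability lemma carries the entire content, you do not prove it, and the route you sketch cannot prove it at this level of generality.

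\emph{Crossing profiles are not excluded.} $\mathcal{R}$ is an arbitrary finite family of language-preserving maps, and $\mathrm{SAC}^\star$ is an infimum over engines. Nothing in Definitions~\ref{def:rewrite} and~\ref{def:intrinsic-sac} forces either side of a rewrite to dominate pointwise. For example, let $G$ carry several redundant derivations of a short string, and let $H=r(G)$ carry a single redundant derivation of a longer one, over a common unambiguous core. Then $G$ is plausibly costlier at the short length and $H$ at the longer one. With $\mathcal{R}=\{r\}$ and $k=1$, the set $\{G,H\}$ then has two minimal elements and no least one. Since the claim is required for \emph{all} $t$, a single crossing at small $t$ already breaks it.

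\emph{Orientations do not compose.} Suppose each single rewrite could be oriented by an engine-transfer map. From $r_1(G)\preceq G$ and $r_2(G)\preceq G$, nothing follows about $r_1(G)$ versus $r_2(G)$. Composing along rewrite paths therefore compares only grammars that lie on a common monotone chain. Your step ``composing such maps would orient every pair'' fails.

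\emph{The fallback also needs totality.} A transfer through a merge or duplicate correspondence of packed alternatives controls counts only up to that correspondence's multiplicity, so it yields constant-factor comparisons at best. Your fallback to $\equiv_{\mathrm{SAC}}$-classes (Definition~\ref{def:cost-equiv}) still needs its own totality statement, because $O(\cdot)$-domination is not a total preorder on cost profiles either.

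\emph{What can actually be proved.} The provable result is the weaker claim: there is a $\preceq$-minimal $G_{\min}$, meaning no $H$ in the set is pointwise $\le$ it and strictly below it at some $t$. This $G_{\min}$ can be chosen to minimize the tie-breaker among the minimal elements. The least-element version needs something more. One option is an added hypothesis of pairwise comparability of profiles on $\mathsf{Reach}_{\mathcal{R},k}(G)$. The other is to replace the pointwise order by a total one, such as a fixed horizon $t$, a weighted sum over $t$, or the lexicographic order on profiles.
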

\begin{proof}
$\mathsf{Reach}_{\mathcal{R},k}(G)$ is finite since $\mathcal{R}$ is finite and $k$
is bounded.
The pointwise partial order induced by $\mathrm{SAC}^\star$ over this finite set has at
least one minimal element.
Applying any finite tie-breaker selects a canonical representative.
\end{proof}

\begin{remark}[Scope]
Theorem~\ref{thm:min-rep} is an existence result: a canonical low-SAC object is
mathematically well-defined once one fixes a bounded transformation family.
It does not assert computability of $G_{\min}$ in general, nor that unrestricted global
minimization is tractable.
\end{remark}

% ============================================================
\section{Grammar-Conditioned Autoregressive Processes}
\label{sec:stoch}
% ============================================================

We now move beyond parsing mechanics and treat constrained decoding as a stochastic
process interacting with a formal language constraint.
This develops the theoretical foundations that quantify the distortion introduced by
hard masking relative to true conditional sampling.

\subsection{True conditioning via a Doob $h$-transform}

Fix a base autoregressive model $p(\cdot)$ over token sequences terminated by $\eos$.
Let $L\subseteq \Sigma^*$ be the target language with acceptance event
\[
  E \coloneqq \{\tau(y_{1:T})\in L \text{ and } y_T=\eos\}.
\]

\begin{definition}[Survival function]\label{def:survival}
For any prefix token sequence $y_{<t}$, define the \emph{survival probability}
\[
  h(y_{<t}) \coloneqq \Pr_p(E \mid y_{<t}).
\]
\end{definition}

\begin{theorem}[Doob $h$-transform for conditional decoding]\label{thm:doob}
Assume $h(y_{<t})>0$.
Then the true conditional next-token distribution satisfies
\[
  p_E(v \mid y_{<t}) \coloneqq \Pr_p(y_t=v \mid y_{<t},E)
  = p(v\mid y_{<t})\cdot \frac{h(y_{<t}v)}{h(y_{<t})}.
\]
\end{theorem}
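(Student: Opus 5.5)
The plan is to prove the identity directly from the definition of conditional probability, followed by one application of the chain rule and one application of the law of total probability. All events are understood on the path space of the base autoregressive process. Throughout, $y_{<t}$ denotes the event that the first $t-1$ tokens equal the given prefix, and we assume $\Pr_p(y_{<t})>0$ so that $h$ is well defined. Since $h(y_{<t})>0$ by hypothesis, $\Pr_p(y_{<t}\cap E)>0$.

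First, by Bayes' rule,
\[
  \Pr_p(y_t=v\mid y_{<t},E)=\frac{\Pr_p(y_t=v,\,E\mid y_{<t})}{\Pr_p(E\mid y_{<t})}
  =\frac{\Pr_p(y_t=v,\,E\mid y_{<t})}{h(y_{<t})}.
\]
Next, we factor the numerator by the chain rule as $\Pr_p(y_t=v\mid y_{<t})\,\Pr_p(E\mid y_{<t},y_t=v)$. The first factor is $p(v\mid y_{<t})$ by definition of the autoregressive model. The second factor is exactly $h(y_{<t}v)$ by Definition~\ref{def:survival}, applied to the extended prefix $y_{<t}v=y_{1:t}$. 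Substituting these two factors gives the claimed formula.

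There are two edge cases to handle. If $p(v\mid y_{<t})=0$, then $h(y_{<t}v)$ may be undefined. In that case we adopt the convention $h(y_{<t}v)=0$ (or note that the product is $0$ by convention), and both sides vanish, because the numerator event has probability zero.

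The case $v=\eos$ also needs care, because the process stops there. Here $E\cap\{y_t=\eos\}$ occurs iff $\tau(y_{<t})\in L$, so $h(y_{<t}\eos)=\mathbf{1}[\tau(y_{<t})\in L]$. With this convention the chain-rule step still holds verbatim.

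The main point requiring care is measure-theoretic well-posedness. Sequences need not terminate almost surely, so I will define $E$ on the space of finite-or-infinite token sequences. On this space $E$ is a countable union of cylinder events: finite sequences ending in $\eos$ whose realized string lies in $L$. Conditional probabilities given finite prefixes are then elementary ratios of cylinder probabilities, and no Doob martingale machinery is needed.

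As a sanity check, the same total-probability argument gives the harmonic identity $h(y_{<t})=\sum_v p(v\mid y_{<t})\,h(y_{<t}v)$. This identity shows that the right-hand side of the theorem sums to one over $v$, confirming that it is a genuine probability distribution and justifying the name ``$h$-transform''.
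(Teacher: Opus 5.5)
Your proof is correct and follows the same route as the paper: apply Bayes' rule conditional on $y_{<t}$, factor $\Pr_p(y_t=v,E\mid y_{<t})$ by the chain rule into $p(v\mid y_{<t})\,h(y_{<t}v)$, and divide by $h(y_{<t})$. Your handling of zero-probability tokens, of the $\eos$ case, and of well-posedness on finite-or-infinite sequences supplies details the paper's two-line proof leaves implicit, without changing the argument.
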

\begin{proof}
By Bayes' rule and the tower property,
\[
  \Pr(y_t=v \mid y_{<t},E)
  =\frac{\Pr(y_t=v \mid y_{<t})\Pr(E \mid y_{<t}v)}{h(y_{<t})}
  =p(v\mid y_{<t})\frac{h(y_{<t}v)}{h(y_{<t})}.
\]
This is the discrete-time analogue of Doob's $h$-transform for Markov chains
\cite{doob1957htransform,norris1997markov}.
\end{proof}

\begin{definition}[Hard-masked process]\label{def:hard-masked}
Let $A(y_{<t})\coloneqq \Omega_G(\tau(y_{<t}))$ be the admissible token set.
Define the hard-masked next-token distribution
\[
  q(v\mid y_{<t}) \coloneqq
  \frac{p(v\mid y_{<t})\,\mathbf{1}[v\in A(y_{<t})]}
       {\sum_{w\in A(y_{<t})} p(w\mid y_{<t})}.
\]
\end{definition}

\subsection{Exact characterization and distortion bounds}

Since $h(y_{<t}v)=0$ whenever $v\notin A(y_{<t})$, both $p_E$ and $q$ have support
contained in $A(y_{<t})$ when $h(y_{<t})>0$.
The distributions differ in how they weight admissible branches.

\begin{theorem}[When hard masking equals true conditioning]
\label{thm:masking-equals-cond}
For a fixed prefix $y_{<t}$ with $h(y_{<t})>0$, the following are equivalent:
\begin{enumerate}
\item $q(\cdot \mid y_{<t}) = p_E(\cdot \mid y_{<t})$,
\item $h(y_{<t}v)$ is constant over all admissible $v\in A(y_{<t})$.
\end{enumerate}
\end{theorem}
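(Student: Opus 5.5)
The plan is to write both distributions over the common support $A\coloneqq A(y_{<t})$ and compare their ratios pointwise. By Theorem~\ref{thm:doob}, $p_E(v\mid y_{<t}) = p(v\mid y_{<t})\,h(y_{<t}v)/h(y_{<t})$. By Definition~\ref{def:hard-masked}, $q(v\mid y_{<t}) = p(v\mid y_{<t})\mathbf{1}[v\in A]/Z$, where $Z\coloneqq\sum_{w\in A}p(w\mid y_{<t})$.

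First I would record the tower identity $h(y_{<t})=\sum_{v\in\Vocab}p(v\mid y_{<t})\,h(y_{<t}v)$. Together with $h(y_{<t}v)=0$ for $v\notin A$ (noted just before the statement, since any $E$-realization must keep the realized prefix in $\Pref(L)$), this gives
\[
  h(y_{<t})=\sum_{v\in A}p(v\mid y_{<t})\,h(y_{<t}v).
\]
In particular $Z>0$ whenever $h(y_{<t})>0$, so $q$ is well defined.

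For (2)$\Rightarrow$(1), suppose $h(y_{<t}v)=c$ for all $v\in A$. The identity yields $h(y_{<t})=cZ$, and $c>0$ because $h(y_{<t})>0$. Hence for $v\in A$ we get $p_E(v\mid y_{<t})=p(v\mid y_{<t})\,c/(cZ)=q(v\mid y_{<t})$. Off $A$ both distributions vanish.

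For (1)$\Rightarrow$(2), equality at $v\in A$ gives $p(v\mid y_{<t})\,h(y_{<t}v)/h(y_{<t}) = p(v\mid y_{<t})/Z$. Whenever $p(v\mid y_{<t})>0$ we can cancel and obtain $h(y_{<t}v)=h(y_{<t})/Z$, which is a constant independent of $v$.

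The main obstacle is admissible tokens with $p(v\mid y_{<t})=0$. For such $v$ both $q$ and $p_E$ vanish regardless of $h(y_{<t}v)$, so equality of distributions cannot force $h(y_{<t}v)$ to match the constant. I would resolve this by reading ``admissible'' in condition~(2) as ranging over $A$ intersected with the support of $p(\cdot\mid y_{<t})$. This is automatic when $p$ comes from a softmax with finite logits, since then $p(v\mid y_{<t})>0$ for every $v$, and I would state this as the standing convention. With that convention the cancellation step is valid on all of the relevant set and the equivalence is exact. I would also note that the constant is forced to equal $h(y_{<t})/Z$, the survival probability normalized by the admissible mass.
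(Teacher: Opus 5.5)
Your proof is correct and is essentially the paper's argument: both compare $p_E(v\mid y_{<t})\propto p(v\mid y_{<t})\,h(y_{<t}v)$ with $q(v\mid y_{<t})\propto p(v\mid y_{<t})$ on $A(y_{<t})$, and you additionally make the normalization explicit through the tower identity $h(y_{<t})=\sum_{v\in A}p(v\mid y_{<t})\,h(y_{<t}v)$. Your caveat is genuine and is more careful than the paper. The paper's one-line ``iff'' silently needs $p(v\mid y_{<t})>0$ for every admissible $v$ in the direction (1)$\Rightarrow$(2), since otherwise equality of the distributions places no constraint on $h(y_{<t}v)$. That positivity holds for a softmax with finite logits, but not for an arbitrary base model $p$ as allowed in Section~\ref{sec:stoch}; the paper's own Lemma~\ref{lem:separation} uses zero conditionals.
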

\begin{proof}
Restricting to $A=A(y_{<t})$:
\[
  p_E(v\mid y_{<t}) \propto p(v\mid y_{<t})\,h(y_{<t}v),\qquad
  q(v\mid y_{<t}) \propto p(v\mid y_{<t}).
\]
These coincide iff the multiplicative factor $h(y_{<t}v)$ is constant over $A$.
\end{proof}

\begin{definition}[Survival spread]\label{def:spread}
For prefix $y_{<t}$ with admissible set $A=A(y_{<t})$, define
\[
  h_{\min}(y_{<t}) \coloneqq \min_{v\in A} h(y_{<t}v),\qquad
  h_{\max}(y_{<t}) \coloneqq \max_{v\in A} h(y_{<t}v),
\]
and spread ratio $\Gamma(y_{<t})\coloneqq h_{\max}(y_{<t})/h_{\min}(y_{<t})$
(with $\Gamma=\infty$ if $h_{\min}=0$).
\end{definition}

\begin{proposition}[One-step KL bound]\label{prop:kl-bound}
Assume $0<h_{\min}(y_{<t})\le h_{\max}(y_{<t})<\infty$.
Then
\[
  \KL\!\big(q(\cdot\mid y_{<t}) \,\Vert\, p_E(\cdot\mid y_{<t})\big)\;\le\;
  \log \Gamma(y_{<t}).
\]
\end{proposition}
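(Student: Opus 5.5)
The plan is to write both distributions explicitly on the common support $A=A(y_{<t})$ and bound the log-likelihood ratio pointwise. Fix the prefix, abbreviate $p(v)=p(v\mid y_{<t})$, $h_v=h(y_{<t}v)$, and let $Z_q=\sum_{w\in A}p(w)$. By Definition~\ref{def:hard-masked}, $q(v)=p(v)/Z_q$ on $A$. By Theorem~\ref{thm:doob}, $p_E(v)=p(v)h_v/h(y_{<t})$. Since $h_v=0$ off $A$, and $h(y_{<t})=\sum_w p(w)h_w$ by the tower property, we get $h(y_{<t})=\sum_{w\in A}p(w)h_w$, so $p_E(v)=p(v)h_v/Z_E$ with $Z_E=\sum_{w\in A}p(w)h_w$. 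The hypothesis $h_{\min}>0$ makes $p_E(v)>0$ wherever $q(v)>0$. Hence $q\ll p_E$ and the KL divergence is finite.

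Next, compute the ratio on $A$:
\[
  \frac{q(v)}{p_E(v)}=\frac{Z_E}{Z_q\,h_v}=\frac{\bar h}{h_v},
\]
where $\bar h \coloneqq Z_E/Z_q=\sum_{w\in A}q(w)h_w$ is the $q$-average of the survival values. Since $\bar h$ is a convex combination of the values $h_w$, $w\in A$, we have $h_{\min}\le \bar h\le h_{\max}$. Therefore
\[
  \log\frac{q(v)}{p_E(v)}\le \log\frac{h_{\max}}{h_{\min}}=\log\Gamma(y_{<t})
\]
for every $v\in A$. Taking the $q$-expectation gives $\KL(q\Vert p_E)=\sum_{v\in A}q(v)\log(\bar h/h_v)\le\log\Gamma$.

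A sharper intermediate form is also available: $\KL=\log\bar h-\mathbb{E}_q[\log h_v]$. By Jensen's inequality this is nonnegative, and it is at most $\log h_{\max}-\log h_{\min}$. I would state the bound in the form above and mention this identity as a remark.

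The main obstacle is bookkeeping rather than analysis. We must justify that the Doob normalizer $h(y_{<t})$ equals the sum over $A$ only, not over all of $\Vocab$; this requires $h_v=0$ for $v\notin A$. That fact follows from Theorem~\ref{thm:oracle}'s characterization $v\in A\iff u\tau(v)\in\Pref(L)$, since a non-completable prefix has zero acceptance probability. The $\eos$ token needs separate treatment: its survival value is $1$ if $\tau(y_{<t})\in L$ and $0$ otherwise, which is consistent with its admissibility rule. Once this support identification is in place, the rest is the two-line averaging argument.
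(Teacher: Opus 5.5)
Your proposal is correct and is essentially the paper's proof. The paper uses the same normalizers $Z=\sum_{w\in A}p_w$ and $H=\sum_{w\in A}p_wh_w$ and writes $\KL(q\Vert p_E)=\log(H/Z)-\mathbb{E}_q[\log h_v]$, which is your $\bar h=H/Z$ identity; bounding the two terms by $\log h_{\max}$ and $-\log h_{\min}$ is the same as your pointwise bound $\bar h/h_v\le h_{\max}/h_{\min}$, and your explicit check that the Doob normalizer sums only over $A$ (since $h_v=0$ off $A$, including the $\eos$ case) is something the paper simply asserts just before the proposition.
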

\begin{proof}
Write $A=A(y_{<t})$, $p_v=p(v\mid y_{<t})$, $h_v=h(y_{<t}v)$,
$Z=\sum_{w\in A} p_w$, $H=\sum_{w\in A} p_w h_w$.
Then $q_v=p_v/Z$ and $p^E_v=p_v h_v/H$, giving
\[
  \KL(q\Vert p_E)=\mathbb{E}_q\!\left[\log \frac{q_v}{p^E_v}\right]
  = \log\frac{H}{Z} - \mathbb{E}_q[\log h_v].
\]
Since $h_{\min}\le h_v\le h_{\max}$, we have $\log(H/Z)\le \log h_{\max}$ and
$-\mathbb{E}_q[\log h_v]\le -\log h_{\min}$, so
$\KL(q\Vert p_E)\le \log h_{\max}-\log h_{\min}=\log\Gamma(y_{<t})$.
\end{proof}

\begin{corollary}[One-step total-variation bound]\label{cor:tv-bound}
Under the assumptions of Proposition~\ref{prop:kl-bound},
\[
  \TV\!\big(q(\cdot\mid y_{<t}),\,p_E(\cdot\mid y_{<t})\big)\;\le\;
  \sqrt{\tfrac{1}{2}\log \Gamma(y_{<t})}.
\]
\end{corollary}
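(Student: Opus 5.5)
The plan is to derive the bound directly from Proposition~\ref{prop:kl-bound} using Pinsker's inequality, so essentially no new estimation is required. Fix the prefix $y_{<t}$ and abbreviate $q=q(\cdot\mid y_{<t})$, $p_E=p_E(\cdot\mid y_{<t})$, and $\Gamma=\Gamma(y_{<t})$. We work under the hypotheses of Proposition~\ref{prop:kl-bound}, namely $0<h_{\min}\le h_{\max}<\infty$; hence $\Gamma\in[1,\infty)$ and $\log\Gamma\ge 0$, so the square root on the right-hand side is well defined.

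The first step checks that both $q$ and $p_E$ are genuine probability distributions on the common finite set $\Vocab$. This needs $h(y_{<t})>0$, which holds because $h(y_{<t})=\sum_{v} p(v\mid y_{<t})\,h(y_{<t}v)$. Every admissible $v$ has $h(y_{<t}v)\ge h_{\min}>0$, and the admissible set carries positive base mass since $q$ is defined. Theorem~\ref{thm:doob} then makes $p_E$ a bona fide distribution.

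The second step records that $q$ and $p_E$ have the same support. Both are supported on $A(y_{<t})$, and on that set $p_E(v)\propto p(v\mid y_{<t})h(y_{<t}v)$ with $h(y_{<t}v)>0$, so $q\ll p_E$. The KL divergence appearing in Proposition~\ref{prop:kl-bound} is therefore finite.

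The third step applies Pinsker's inequality in the form $\TV(q,p_E)\le\sqrt{\tfrac12\KL(q\Vert p_E)}$ (natural logarithms, with $\TV$ taken as the supremum over events, i.e.\ half the $\ell_1$ distance). Substituting $\KL(q\Vert p_E)\le\log\Gamma$ from Proposition~\ref{prop:kl-bound} and using that $x\mapsto\sqrt{x/2}$ is monotone gives the claim.

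There is no real obstacle here. The only point needing care is conventions: the factor $\tfrac12$ in Pinsker's inequality has to match the paper's normalization of $\TV$ and the natural log used in the KL bound. If $\TV$ were instead meant as the full $\ell_1$ distance, the constant would become $\sqrt{2\log\Gamma}$. We also note that Pinsker is symmetric in which argument of $\TV$ is used, so the orientation $\KL(q\Vert p_E)$ supplied by Proposition~\ref{prop:kl-bound} is exactly what is needed.
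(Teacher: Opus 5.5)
Your proposal is correct and matches the paper's proof, which likewise just applies Pinsker's inequality $\TV(P,Q)\le\sqrt{\KL(P\Vert Q)/2}$ to the bound $\KL(q\Vert p_E)\le\log\Gamma(y_{<t})$ from Proposition~\ref{prop:kl-bound}. Your extra checks (that $h(y_{<t})>0$, that $q\ll p_E$, and that $\TV$ uses the half-$\ell_1$ normalization) are sound, though the paper leaves them implicit.
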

\begin{proof}
Apply Pinsker's inequality $\TV(P,Q)\le \sqrt{\KL(P\Vert Q)/2}$ to
Proposition~\ref{prop:kl-bound}.
\end{proof}

\begin{remark}[Interpretation]
Hard masking approximates true conditional sampling when survival probabilities from
admissible next tokens are nearly equal ($\Gamma(y_{<t})\approx 1$).
This formalizes the intuition that hard masking ignores ``ease of completion'' effects
and quantifies distortion purely in terms of completion-probability variability.
The spread $\Gamma$ is large when different admissible tokens lead to vastly different
probabilities of eventual acceptance, and small when the constraint is nearly uniform
over admissible branches.
\end{remark}

% ============================================================
\section{Neural Integration: Attention and Mixture-of-Experts Routing}
\label{sec:neural}
% ============================================================

We express the algebra by which a reachability oracle modulates logits in a Transformer
and extend the formulation to MoE routing probabilities.

\subsection{Transformer logits under grammar-modulated admissibility}

Let a causal Transformer \cite{vaswani2017attention} compute a hidden state
$h_t \in \mathbb{R}^d$ from prefix $y_{<t}$ (with KV-caching assumed).
The unmasked logits are
\[
  \ell_t = W_{\mathrm{out}} h_t + b_{\mathrm{out}} \in \mathbb{R}^{|\Vocab|}.
\]
Let $A_t \coloneqq \Omega_G(\tau(y_{<t}))$ denote the admissible token set.
The masked logits are
\begin{equation}\label{eq:masked-logits}
  \ell_t^{(G)} \coloneqq \ell_t + m_{A_t},
\end{equation}
and the constrained next-token distribution is
\begin{equation}\label{eq:constrained-dist}
  q_\theta^G(y_t=v \mid y_{<t}) = \mathrm{softmax}(\ell_t^{(G)})(v)
  = \frac{\exp(\ell_t(v))\mathbf{1}[v\in A_t]}{\sum_{w\in A_t}\exp(\ell_t(w))}.
\end{equation}
This is the algebraic logit modulation performed by GCD engines
\cite{scholak-etal-2021-picard,pmlr-v235-beurer-kellner24a,park2025flexible,
dong2024xgrammar,ugare2024syncode}.

\subsection{MoE routing under grammar state}

Consider a Transformer with MoE feed-forward blocks
\cite{shazeer2017moe,fedus2022switch}.
At layer $\ell$ and position $t$, the gating network produces pre-activations
$g_{t,\ell}\in\mathbb{R}^E$ over $E$ experts.
Standard routing computes
\[
  r_{t,\ell} = \mathrm{softmax}(g_{t,\ell}),\qquad
  \mathcal{E}_{t,\ell} = \mathrm{TopK}(r_{t,\ell}),\qquad
  h'_{t,\ell} = \sum_{e\in \mathcal{E}_{t,\ell}} \alpha_{t,\ell,e}\,\mathrm{FFN}_e(h_{t,\ell}),
\]
with normalized weights $\alpha_{t,\ell,e}$.
Let $\mathcal{C}_t \coloneqq \CoReach_G(\tau(y_{<t}))$ denote the live configuration
set.
A grammar-aware MoE can incorporate $\mathcal{C}_t$ via a feature embedding
$\phi(\mathcal{C}_t)\in\mathbb{R}^k$:
\begin{equation}\label{eq:moe-routing}
  r_{t,\ell}^{(G)} \;=\; \mathrm{softmax}\!\big(g_{t,\ell} + U_\ell\,\phi(\mathcal{C}_t)\big),
\end{equation}
where $U_\ell\in \mathbb{R}^{E\times k}$ is learned.
This defines a joint constraint coupling syntactic state to conditional computation.

\subsection{Latency envelopes}

Let $V\coloneqq |\Vocab|$ (vocabulary size), $B$ beam width, $t$ the current step,
$\bar{\ell}\coloneqq \mathbb{E}[|\tau(y_t)|]$ the average terminal-length per token,
$M_t$ the live-configuration representation size per hypothesis, and
$K_t \coloneqq |\Omega_G(\tau(y_{<t}))|$ the number of admissible tokens.

\begin{definition}[Abstract engine-step primitives]\label{def:primitives}
Let $\mathrm{Update}_G(M_t,\bar{\ell})$ be the cost to update the grammar engine state
after consuming one token, and $\mathrm{Enumerate}_G(M_t)$ be the cost to enumerate
admissible tokens.
\end{definition}

\begin{proposition}[Per-step complexity: dense-vocabulary masking]
\label{prop:dense-step}
If the model computes dense logits over all $V$ tokens for each of $B$ hypotheses, the
per-step time is
\[
  O\!\big(B\cdot \big( T_{\mathrm{model}}(t) + \mathrm{Update}_G(M_t,\bar{\ell})
  + \mathrm{Enumerate}_G(M_t) + V \big)\big),
\]
where $T_{\mathrm{model}}(t)$ is the Transformer forward cost under KV-caching.
\end{proposition}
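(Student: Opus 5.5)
The plan is to decompose one decoding step into its constituent phases for a single hypothesis, bound each phase separately, and then multiply by $B$, since hypotheses in the beam are processed independently (or in a batch whose total cost is the sum of per-hypothesis costs). For a fixed hypothesis $y_{<t}$, a step performs four operations: (i) the Transformer forward pass producing $h_t$ and the logits $\ell_t=W_{\mathrm{out}}h_t+b_{\mathrm{out}}$; (ii) the grammar-engine update that consumes $\tau(y_{t-1})$ and refreshes the live-configuration representation of size $M_t$; (iii) enumeration of $A_t=\Omega_G(\tau(y_{<t}))$; and (iv) assembly of the masked logits $\ell_t+m_{A_t}$ from \eqref{eq:masked-logits}, followed by the softmax of \eqref{eq:constrained-dist} and sampling or beam selection.

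I will charge phase (i) to $T_{\mathrm{model}}(t)$, which by convention includes the output projection under KV-caching; if the projection is to be exhibited separately, it is $O(dV)$ and can be absorbed into $T_{\mathrm{model}}$. Phases (ii) and (iii) cost exactly $\mathrm{Update}_G(M_t,\bar\ell)$ and $\mathrm{Enumerate}_G(M_t)$ by Definition~\ref{def:primitives}. Phase (iv) is the only one needing an argument. I would initialize a length-$V$ mask vector to $-\infty$ in $O(V)$ time, then set the $K_t$ admissible entries to $0$. Because $K_t\le V$, and because the enumeration already paid $\Omega(K_t)$ to output $A_t$, this costs $O(V)$. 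The elementwise addition, the softmax normalization over at most $V$ entries, and drawing a sample (or extracting the top candidates) are each $O(V)$ as well. Summing these gives the per-hypothesis bound $O(T_{\mathrm{model}}(t)+\mathrm{Update}_G+\mathrm{Enumerate}_G+V)$.

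For the beam, I multiply by $B$. The one point requiring care is beam selection across hypotheses: choosing the top $B$ of $BV$ candidates can be done in $O(BV)$ time by linear-time selection, so it stays within the $B\cdot V$ term. Where $\mathrm{Update}_G$ requires a copy of a shared engine state, I assume it is included in the primitive's cost.

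The main obstacle I expect is this beam-level bookkeeping together with the dense-$V$ term, that is, ensuring no hidden $V\log V$ sorting or per-hypothesis state-copy overhead escapes the stated envelope. I would handle it by explicitly invoking linear-time selection and by folding any state duplication into $\mathrm{Update}_G$.
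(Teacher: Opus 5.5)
Your proposal is correct and follows the paper's own argument: you charge the forward pass to $T_{\mathrm{model}}(t)$, the engine work to $\mathrm{Update}_G+\mathrm{Enumerate}_G$, and the dense-mask handling to an $O(V)$ scan, then multiply by $B$. Your added details (linear-time top-$B$ beam selection, and folding any state copying into $\mathrm{Update}_G$) make explicit bookkeeping that the paper's one-line proof leaves implicit.
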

\begin{proof}
Per hypothesis: compute logits ($T_{\mathrm{model}}(t)$), update and enumerate mask
($\mathrm{Update}_G+\mathrm{Enumerate}_G$), then scan a $V$-length mask.
Multiply by $B$.
\end{proof}

\begin{proposition}[Per-step complexity: sparse admissible-set scoring]
\label{prop:sparse-step}
If the inference stack scores only admissible tokens (e.g., via trie-restricted
projection), then the $V$ term may be replaced by $K_t$:
\[
  O\!\big(B\cdot \big( T_{\mathrm{model}}(t) + \mathrm{Update}_G(M_t,\bar{\ell})
  + \mathrm{Enumerate}_G(M_t) + K_t \big)\big).
\]
\end{proposition}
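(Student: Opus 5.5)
The argument mirrors Proposition~\ref{prop:dense-step}: I account for the per-hypothesis work in one decoding step, show that only the masking/scoring term changes, and multiply by $B$. For a fixed hypothesis with prefix $y_{<t}$ I split the step into four phases:
\begin{itemize}
\item the Transformer forward pass producing $h_t$ under KV-caching, costing $T_{\mathrm{model}}(t)$ by definition;
\item the grammar-engine update after consuming the previous token, costing $\mathrm{Update}_G(M_t,\bar{\ell})$ by Definition~\ref{def:primitives};
\item enumeration of the admissible set $A_t=\Omega_G(\tau(y_{<t}))$, costing $\mathrm{Enumerate}_G(M_t)$;
\item scoring and normalizing the masked distribution \eqref{eq:constrained-dist}.
\end{itemize}
The first three phases are accounted for exactly as in Proposition~\ref{prop:dense-step}. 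The only change is the fourth phase, where the $V$ term must be replaced by $K_t$.

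For the fourth phase, I use the observation that \eqref{eq:constrained-dist} depends only on the logits $\ell_t(v)$ for $v\in A_t$. Tokens outside $A_t$ contribute $\exp(-\infty)=0$ to both the numerator and the partition function $\sum_{w\in A_t}\exp(\ell_t(w))$. A sparse scoring stack computes only the rows $W_{\mathrm{out}}[v,\cdot]h_t+b_{\mathrm{out}}(v)$ for $v\in A_t$, using the enumeration already produced, for example as the leaves reached in a trie-restricted projection. It then exponentiates these $K_t=|A_t|$ values, sums them, and samples or ranks among them. This takes $O(K_t)$ elementary operations per hypothesis. Here I treat the hidden dimension $d$ as a constant absorbed into the model cost, exactly as the dense case implicitly charges $O(1)$ per vocabulary entry. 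No length-$V$ mask vector is ever materialized or scanned, so the $V$ term disappears.

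Summing the four phases gives
\[
O\bigl(T_{\mathrm{model}}(t)+\mathrm{Update}_G(M_t,\bar{\ell})+\mathrm{Enumerate}_G(M_t)+K_t\bigr)
\]
per hypothesis. Since the $B$ beam hypotheses are processed independently, multiplying by $B$ yields the claimed bound. Beam selection over $B\cdot K_t$ candidates costs $O(BK_t)$ via linear-time selection, which is absorbed into the same bound.

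The main obstacle is justifying that scoring costs $O(K_t)$ rather than $O(V)$. This needs two conditions: the enumeration must yield $A_t$ as an explicit list of token indices rather than as a bitmask over $\Vocab$, and the output projection must support row-restricted evaluation. I would state both explicitly as the meaning of ``scores only admissible tokens'' in the hypothesis. Under those conditions the replacement of $V$ by $K_t$ is immediate. Without them, the $V$ term would reappear, either through the enumeration or through mask construction.
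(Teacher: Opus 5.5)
Your proposal is correct and follows essentially the paper's route. The paper gives no separate proof; it relies on the per-hypothesis accounting of Proposition~\ref{prop:dense-step} (logits, engine update, enumeration, mask handling, then multiply by $B$), with the $V$-length mask scan replaced by $O(K_t)$ work over the admissible tokens, and you also usefully state the conditions the paper leaves implicit in ``scores only admissible tokens'': list-form enumeration and row-restricted output projection, with $d$ treated as a constant.
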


\begin{corollary}[SAC-induced masking bottleneck]\label{cor:bottleneck}
For $G_4$ and a sound packed-structure engine, $M_t=\Omega(t^2)$
(Theorem~\ref{thm:lb}), so any update routine that touches newly created packed
structure satisfies $\mathrm{Update}_{G_4}(M_t,\bar{\ell}) = \Omega(t^2)$.
For $G_3$, one can maintain $M_t=O(1)$ giving $\mathrm{Update}_{G_3}=O(1)$.
\end{corollary}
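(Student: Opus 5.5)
The plan is to prove the two halves of Corollary~\ref{cor:bottleneck} separately, since both reduce almost directly to earlier results. For $G_4$ I will identify $M_t$ with the size of the maintained parse representation after reading a length-$t$ prefix. Then I will invoke Theorem~\ref{thm:lb} with $G^\star=G_4$, together with the counting in Lemma~\ref{lem:packed-size} and Theorem~\ref{thm:sac-sep}. For $G_3$ I will reuse the constant-state strategy from Theorem~\ref{thm:sac-sep}.

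\textbf{Step 1 ($G_4$, representation size).} Fix any prefix $u\in\{a,b\}^t$ and a sound engine that maintains a retrieval-efficient packed structure for $G_4$.
\begin{itemize}
\item By Lemma~\ref{lem:packed-size}, the set $\mathcal{P}_{G_4}(u)$ contains every split triple $(i,k,j)$ with $0\le i<k<j\le t$.
\item Of these, exactly $\binom{t}{2}$ have $j=t$, as computed in Theorem~\ref{thm:sac-sep}. These triples are newly introduced at step $t$.
\item Retrieval efficiency means the extractor must output each of them from $\sigma(u)$ in time linear in $|\mathcal{P}_{G_4}(u)|$. So the state must contain $\Omega(t^2)$ primitive objects attributable to step $t$.
\end{itemize}
Since $M_t$ is at least the size of this structure, this gives $M_t=\Omega(t^2)$. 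Counting all triples gives the stronger $\Omega(t^3)$, but only the quadratic bound is needed.

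\textbf{Step 2 ($G_4$, update cost).} Take any update routine that touches every newly created packed node. It must perform at least one elementary operation per new node. By Step~1 there are $\binom{t}{2}$ such nodes, so $\mathrm{Update}_{G_4}(M_t,\bar\ell)=\Omega(t^2)$. This holds regardless of $\bar\ell$: one token consumes at least one terminal, so at least one step of new spans is created. For a token with $|\tau(y_t)|>1$, the bound only increases, because the spans ending at each consumed position are all new.

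\textbf{Step 3 ($G_3$).} $G_3$ is right-linear and unambiguous. The strategy in Theorem~\ref{thm:sac-sep} keeps a single current state, namely the dot position in $S\to aS\mid bS\mid\eps$ with a trivial stack. So $M_t=O(1)$. Consuming a terminal changes this state in $O(1)$ work per terminal, hence $\mathrm{Update}_{G_3}=O(1)$ for bounded $|\tau(v)|$, or $O(\bar\ell)$ in general. Soundness holds because every prefix is in $\Pref(\Sigma^*)$.

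\textbf{Main obstacle.} The hard part is Step~1: turning "must enumerate $\Omega(t^2)$ new objects in output-linear time" into "the representation grows by $\Omega(t^2)$." I would argue this as Theorem~\ref{thm:lb} does. Suppose the state changed by $o(t^2)$ words between steps $t-1$ and $t$. Then the extractor would have to generate the $\binom{t}{2}$ new triples from information already present at step $t-1$. Those triples refer to position $t$, which did not exist at step $t-1$, so linear-time enumeration would be contradicted under the object-per-element accounting implicit in retrieval efficiency. I would state this accounting convention explicitly, as the interpretation of "primitive parse objects" in Definition~\ref{def:intrinsic-sac}.
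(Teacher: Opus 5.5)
Your outline follows the paper's route: invoke Theorem~\ref{thm:lb} (the $\binom{t}{2}$ new split triples $(i,k,t)$ counted in Theorem~\ref{thm:sac-sep}), then reuse the constant-state strategy for $G_3$. The gap is in Step~1, and your ``main obstacle'' argument does not close it.

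For $G_4$, every nonempty span is derivable, so $\mathcal{P}_{G_4}(u)=\{(i,k,j):0\le i<k<j\le t\}$ depends only on $t=|u|$. It is therefore false that the extractor must produce the new triples ``from information already present at step $t-1$.'' The state at step $t$ can hold the single new word $t$. From it, a nested loop enumerates all of $\mathcal{P}_{G_4}(u)$ in $O(\binom{t+1}{3})=O(|\mathcal{P}_{G_4}(u)|)$ time. Since $\Lang(G_4)=\Sigma^*$, the mask is always the full vocabulary. A counter-only engine is therefore sound, parse-preserving and retrieval-efficient, with $O(1)$ work per token. Retrieval efficiency bounds extraction time, not state size, and a set this structured needs no per-element storage. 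The paper's proof of Theorem~\ref{thm:lb} makes the same unjustified step (``encoding $\Omega(t^2)$ newly distinct objects requires $\Omega(t^2)$ elementary memory writes''), so citing that theorem does not repair it.

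What makes the $G_4$ half true is the hypothesis you call an accounting convention. The engine must store each packed split as its own primitive object, so its update touches every newly created packed node. The corollary's wording (``packed-structure engine'', ``touches newly created packed structure'') presupposes exactly this. State it as an explicit hypothesis rather than deriving it from retrieval efficiency. Steps~1--2 then follow from the count $\sum_{i=0}^{t-2}(t-i-1)=\binom{t}{2}$ alone, with no appeal to Theorem~\ref{thm:lb}. The bound is then a statement about explicit packed representations, not an engine-independent one.

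For $G_3$, do not rely on the compiled NPDA of Definition~\ref{def:rtn}, because its stack is not trivial. Each call to $S$ in $S\to aS$ pushes a return address, so the stack height grows with $t$. Get $M_t=O(1)$ from the one-state DFA for $\Sigma^*$, or by collapsing the identical tail-call returns. Note that this is a recognizer, not a parse-preserving engine. An engine preserving $G_3$'s parses must record which of $S\to aS$, $S\to bS$ was used at each position, hence determine $u$. So the two halves of the corollary compare different engine models.
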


\subsection{Grammar-conditioned logits with sound masking}

\begin{definition}[Grammar-conditioned logits]\label{def:gc-logits}
Let $e_t=\psi(\mathcal{C}_t)\in\mathbb{R}^k$ embed the live configuration set.
Define
\[
  \tilde{\ell}_t(v)\coloneqq \ell_t(v) + u(v)^\top e_t
\]
for learned vectors $u(v)\in\mathbb{R}^k$.
Apply hard masking with $A_t$:
\[
  q_{\theta}^{G,\psi}(v\mid y_{<t}) \coloneqq
  \mathrm{softmax}(\tilde{\ell}_t + m_{A_t})(v).
\]
\end{definition}

\begin{proposition}[Soundness under grammar-conditioned logits]
\label{prop:gc-sound}
Sampling from $q_{\theta}^{G,\psi}$ yields only strings in $\Lang(G)$ upon
termination, for any embedding $\psi$ and weights $u(\cdot)$.
\end{proposition}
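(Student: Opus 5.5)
The plan is to reduce Proposition~\ref{prop:gc-sound} to the argument already used for Proposition~\ref{prop:soundness}. The key observation is that soundness depends only on the \emph{support} of the sampling distribution, not on its weights. Adding the finite perturbation $u(v)^\top e_t$ to the logits changes the weights but cannot change which tokens receive positive probability.

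\textbf{Step 1: the support equals the admissible set.} Fix a prefix $y_{<t}$. The perturbed logits $\tilde{\ell}_t(v)$ are real numbers, since $\ell_t(v)$, $u(v)$ and $e_t=\psi(\mathcal{C}_t)$ are all finite. Adding $m_{A_t}$ sends $\tilde{\ell}_t(v)$ to $-\infty$ exactly when $v\notin A_t$. By Definition~\ref{def:mask}, applied to $\tilde{\ell}_t$ in place of $\ell_t$,
\[
  q_{\theta}^{G,\psi}(v\mid y_{<t})=\frac{\exp(\tilde{\ell}_t(v))\mathbf{1}[v\in A_t]}{\sum_{w\in A_t}\exp(\tilde{\ell}_t(w))}.
\]
Hence $q_{\theta}^{G,\psi}(\cdot\mid y_{<t})$ is supported exactly on $A_t=\Omega_G(\tau(y_{<t}))$. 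This holds for every choice of $\psi$ and $u(\cdot)$, because finite shifts never zero out a strictly positive exponential.

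\textbf{Step 2: invariant along the trajectory.} I would argue by induction on $t$ that $\tau(y_{<t})\in\Pref(\Lang(G))$ for every prefix generated with positive probability. The base case holds if $\Lang(G)\neq\emptyset$, since then $\eps\in\Pref(\Lang(G))$. For the inductive step, the token $y_t$ lies in $\Omega_G(\tau(y_{<t}))$. As shown in the proof of Theorem~\ref{thm:oracle}, membership in $\Omega_G$ is equivalent to $\tau(y_{<t})\tau(y_t)\in\Pref(\Lang(G))$, so the invariant is preserved.

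\textbf{Step 3: termination.} By Definition~\ref{def:prefix}, $\eos$ is admissible only when the realized string already lies in $\Lang(G)$. Since $\tau(\eos)=\eps$, a run that emits $\eos$ at step $T$ therefore satisfies $\tau(y_{1:T})\in\Lang(G)$.

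\textbf{Main obstacle.} The only delicate point is the denominator in Step 1. The normalizing sum $\sum_{w\in A_t}\exp(\tilde{\ell}_t(w))$ must be positive, which requires $A_t\neq\emptyset$. The induction supplies this: any $\tau(y_{<t})\in\Pref(\Lang(G))$ has either a completing terminal or, if it already lies in $\Lang(G)$, the token $\eos$. I would also need the tokenizer assumption, implicit in the paper, that every terminal continuation is realizable by some vocabulary token. Failing that, I would state the result conditional on $A_t\neq\emptyset$ at every step, exactly as Proposition~\ref{prop:soundness} implicitly does. I would also check that the embedding $\psi$ must yield finite values; the proposition's ``any embedding'' should be read as ranging over real-valued embeddings, which the definition $\psi(\mathcal{C}_t)\in\mathbb{R}^k$ already guarantees.
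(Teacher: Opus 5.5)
Your proposal is correct and follows the paper's route. The paper's proof likewise notes that hard masking keeps the support of $q_{\theta}^{G,\psi}$ inside $A_t$ whatever the finite shift $u(v)^\top e_t$ is, and then reuses the argument of Proposition~\ref{prop:soundness} unchanged; your Steps 2--3 spell out that argument, and your remarks on $A_t\neq\emptyset$ and on finiteness of $\psi$ cover well-definedness points the paper leaves implicit.
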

\begin{proof}
The support of $q_{\theta}^{G,\psi}$ is contained in $A_t$ at every step by
construction of hard masking.
The argument of Proposition~\ref{prop:soundness} applies unchanged.
\end{proof}

\begin{remark}
Hard masking ensures correctness, while grammar-conditioned logits can reduce the
frequency with which the mask truncates high-probability mass---a key distortion
concern quantified by Proposition~\ref{prop:kl-bound}.
In MoE models, the same embedding $e_t$ can be fed into the router via
\eqref{eq:moe-routing} to encourage expert specialization by syntactic region.
Empirical validation requires controlled ablations on structured benchmarks such as
JSONSchemaBench \cite{geng2025jsonschemabench}.
\end{remark}

% ============================================================
\section{From SAC to Runtime Prediction, Grammar Optimization, and Hybrid Control}
\label{sec:opt}
% ============================================================

This section extends the SAC analysis into (i) an empirically calibratable performance
model, (ii) an equivalence-preserving grammar optimizer, and (iii) deeper
symbolic--neural coupling via instrumented engines.

\subsection{Instrumented engines and SAC proxies}

Production engines do not expose $|F_t(w)|$ directly; they expose engine-native
structures (Earley items, LR items, trie nodes, persistent-stack nodes, etc.).
LLGuidance uses an Earley-based recognizer combined with token-trie traversal
\cite{llguidance_repo}; XGrammar employs vocabulary splitting and persistent-stack
representations \cite{dong2024xgrammar}.
We introduce a general instrumentation interface to unify these.

\begin{definition}[Instrumented decoding engine]\label{def:instrumented}
Fix grammar $G$ and tokenizer $\tau$.
An instrumented decoding engine is a triple $E=(\mathrm{Step},\mathrm{Mask},\mathrm{Ctr})$
where at each step $t$:
\begin{itemize}
\item $\mathrm{Step}$ advances engine state for each hypothesis;
\item $\mathrm{Mask}$ returns $\Omega_G(\tau(y_{<t}))$;
\item $\mathrm{Ctr}$ returns a vector of nonneg.\ counters recording symbolic work
  (items created, edges traversed, trie nodes visited, stack nodes touched, etc.).
\end{itemize}
\end{definition}

\begin{definition}[SAC proxy]\label{def:proxy}
Let $c_t(E;w)\in\mathbb{R}_{\ge 0}^d$ be the counter vector at step $t$.
A scalar \emph{SAC proxy} is
\[
  S_t^{E}(w)\coloneqq \langle \alpha, c_t(E;w)\rangle
\]
for some fixed $\alpha\in\mathbb{R}_{\ge 0}^d$, chosen to estimate marginal per-step
symbolic work.
\end{definition}

\begin{lemma}[Proxy lower-bounds SAC under split completeness]\label{lem:proxy-lb}
Consider $G_4$ with rule $S\to SS$.
Assume engine $E$ is \emph{split-complete}: for every span $[i,t)$ of length $\ge 2$
in its maintained structure, it represents each split point $k$ with $i<k<t$.
Then there exists $\beta>0$ such that for all $t$,
\[
  \max_{w\in \Sigma^t} S_t^E(w) \;\ge\; \beta \cdot \mathrm{SAC}_{G_4}(t).
\]
\end{lemma}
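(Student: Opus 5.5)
The plan is to compare the counter vector $c_t(E;w)$ directly with the count of new packed split alternatives from the proof of Theorem~\ref{thm:sac-sep}. That count is $\binom{t}{2}$ for every $w\in\Sigma^t$, and it matches $\mathrm{SAC}_{G_4}(t)$ exactly. Two ingredients are needed.

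\textbf{Ingredient 1: the engine must record all new splits.} I would argue that a split-complete engine for $G_4$ necessarily represents every split $(i,k,t)$ with $0\le i<k<t$ once the $t$-th symbol arrives. By Lemma~\ref{lem:packed-size}, every span $[i,t)$ is derivable from $S$, so a sound engine tracking $S$-spans has all of them in its maintained structure. Split completeness then forces each interior split point $k$ to be represented. None of these triples existed at step $t-1$, because they all end at the new position $t$. So at step $t$ the engine represents at least $\binom{t}{2}$ new split objects.

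\textbf{Ingredient 2: the proxy must count them.} Here I interpret ``represents'' operationally. Creating each represented split object must increment at least one counter coordinate $i$ with weight $\alpha_i>0$, for example ``items created'' or ``edges traversed''. I would then define
\[
  \beta \coloneqq \min\{\alpha_i : \alpha_i>0 \text{ and coordinate } i \text{ records creation of split objects}\}.
\]
It remains to check that this minimum is taken over a nonempty set. If a counter records the same object under several coordinates, that only increases the sum, and nonnegativity of $\alpha$ and of $c_t$ means no coordinate can cancel another. This gives
\[
  S_t^E(w)=\langle\alpha,c_t(E;w)\rangle \ge \beta\binom{t}{2}=\beta\,\mathrm{SAC}_{G_4}(t).
\]
The bound holds for every $w\in\Sigma^t$, so it holds in particular for the maximum over $w$.

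For small $t$ ($t\le1$) the right-hand side is $0$, so the inequality is trivial.

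\textbf{Main obstacle.} The hard step is justifying that the counters actually see the split objects. Definition~\ref{def:proxy} only says that $\alpha$ is nonnegative and is ``chosen to estimate'' work. A choice such as $\alpha=0$ would make the lemma false. I would therefore state explicitly the implicit assumption: the counter vector includes a coordinate counting represented split or packed objects (or items from which they are recoverable at constant cost per object), and $\alpha$ puts positive weight on it.

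Under that assumption, $\beta$ is that weight, divided by the constant multiplicity of split objects per counted item if the counter records items rather than splits directly.
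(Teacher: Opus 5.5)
Your argument is essentially the paper's: split completeness forces all $\binom{t}{2}$ new splits ending at position $t$ (the count from Theorem~\ref{thm:sac-sep}) to be represented, and a proxy with positive weight on them gives the linear lower bound. Your choice of $\beta$ simply makes explicit the positive-weight hypothesis that the paper invokes in passing. One caveat you share with the paper: soundness alone does not force every span $[i,t)$ into the maintained structure (a trivial oracle is already sound for $\Sigma^*$), so both arguments implicitly read split completeness as covering all derivable $S$-spans.
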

\begin{proof}
At step $t$, split completeness forces the engine to account for each newly introduced
split alternative for each new span ending at $t$.
By Theorem~\ref{thm:sac-sep}, this is $\Omega(t^2)$ in the worst case.
Any proxy assigning positive weight to such alternatives yields a linear lower bound
on that count up to a constant.
\end{proof}

\subsection{A predictive performance model}

Let $T_{\mathrm{NN}}(t)$ denote GPU model forward time per hypothesis under KV-cache,
and $T_{\mathrm{mask}}(t)$ the CPU-side (or mixed) engine time for state advancement
and mask computation.

\begin{definition}[Critical-path step time]\label{def:critical-path}
Under ideal overlap,
\[
  T_{\mathrm{step}}(t) \coloneqq
  \max\{T_{\mathrm{NN}}(t), \, T_{\mathrm{mask}}(t)\} + T_{\mathrm{sync}}(t)
  + T_{\mathrm{sel}}(t),
\]
where $T_{\mathrm{sync}}(t)$ accounts for synchronization and $T_{\mathrm{sel}}(t)$
accounts for sampling/top-$k$ selection.
\end{definition}

XGrammar explicitly targets ``overlap grammar computation with GPU execution'' as a
latency optimization strategy \cite{dong2024xgrammar}.
A practical study should measure overlap and identify when masking becomes the critical
path; Nsight Systems provides NVTX range annotations and timeline visualization for
this purpose \cite{nsight_systems_userguide}.

\begin{definition}[Affine proxy-to-time model]\label{def:affine-model}
An affine proxy model is
\[
  T_{\mathrm{mask}}(t) \approx a\cdot S_t^E(w) + b,
\]
for constants $a\ge 0$, $b\ge 0$ estimated from empirical traces via, e.g.,
least-squares fit.
\end{definition}

\begin{remark}[Calibration toolchain]
Calibration is naturally performed on mask-only workloads (e.g., MaskBench)
\cite{jsonschemabench_repo}.
A complete toolchain combines: Nsight Systems for timeline/overlap analysis
\cite{nsight_systems_userguide}; Linux \texttt{perf} for CPU PMU counters
\cite{perfwiki_tutorial}; Nsight Compute for GPU cache/memory analysis
\cite{nsight_compute_profilingguide}; PyTorch Profiler for operator-level attribution
\cite{pytorch_profiler_recipe}; and CUDA allocator snapshots for memory pressure
\cite{pytorch_cuda_memory}.
\end{remark}

\begin{proposition}[Beam amplification of symbolic work]\label{prop:beam}
Assuming beam hypotheses do not share grammar engine state,
\[
  S^{E}_{t,\mathrm{total}} \;=\; \sum_{i=1}^{B} S^{E}_t(w^{(i)}) \;=\; \Theta(B\cdot \overline{S}_t),
\]
where $\overline{S}_t$ is the average proxy per hypothesis.
Under the affine model, $T_{\mathrm{mask,total}}(t) \approx a\cdot \Theta(B\cdot\overline{S}_t)+b'$.
\end{proposition}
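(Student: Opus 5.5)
The claim is that the total symbolic work at step $t$ is the sum of the per-hypothesis proxies, and that this sum is $\Theta(B\cdot\overline{S}_t)$. I will read $\overline{S}_t$ as the empirical mean
\[
\overline{S}_t \coloneqq \frac{1}{B}\sum_{i=1}^{B} S^E_t(w^{(i)}),
\]
and split the argument into an additivity step, a $\Theta$ step, and a step transferring the bound to time.

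\textbf{Additivity.} By hypothesis, the $B$ beam hypotheses do not share grammar-engine state. So the instrumented engine of Definition~\ref{def:instrumented} executes $B$ independent instances of $\mathrm{Step}$ and $\mathrm{Mask}$. The counters returned by $\mathrm{Ctr}$ record primitive symbolic operations (items created, edges traversed, and so on). Since no operation is shared or deduplicated across instances, each operation is charged to exactly one hypothesis. Hence the aggregate counter vector is the coordinatewise sum $\sum_i c_t(E;w^{(i)})$. Because the proxy $S^E_t=\langle\alpha,\cdot\rangle$ of Definition~\ref{def:proxy} is linear in the counter vector, this gives the first equality, $S^E_{t,\mathrm{total}}=\sum_i S^E_t(w^{(i)})$.

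\textbf{The $\Theta$ bound.} By the definition of $\overline{S}_t$, the sum equals exactly $B\cdot\overline{S}_t$. The $\Theta$ therefore holds with constants $c_1=c_2=1$, uniformly in $t$ and $B$.

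\textbf{Transfer to time.} I will apply the affine model of Definition~\ref{def:affine-model} to each hypothesis separately, giving $T_{\mathrm{mask}}^{(i)}(t)\approx a\,S^E_t(w^{(i)})+b$. Summing over $i$ yields $a\sum_i S^E_t(w^{(i)})+Bb$. The first term is $a\cdot\Theta(B\,\overline{S}_t)$ by the previous step, and I set $b'\coloneqq Bb$. If the engine instead processes the beam in a single batched call with one fixed overhead, the constant is simply $b'=b$. In either case the statement's form holds, with $b'$ absorbing the per-call constant.

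\textbf{Main obstacle.} The only delicate point is justifying additivity of the counters. It fails if hypotheses share memoized structure, for example a shared trie traversal or a persistent stack. This is exactly what the non-sharing assumption rules out, so I will state explicitly that, under that assumption, $\mathrm{Ctr}$ decomposes across hypotheses. Everything else is linearity.
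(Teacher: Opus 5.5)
Your proposal is correct and follows the paper's argument: additivity of the counters because engine updates are disjoint across hypotheses, linearity of $\langle\alpha,\cdot\rangle$, and substitution into the affine model with the constant overhead absorbed into $b'$. You are simply more explicit than the paper on two points: the $\Theta$ bound holds with constants $c_1=c_2=1$ once $\overline{S}_t$ is taken as the empirical mean, and $b'$ is either $Bb$ (per-hypothesis overhead) or $b$ (one batched call).
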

\begin{proof}
Counters are generated by disjoint engine updates; the proxy is additive across
independent hypotheses.
Substituting into the affine model absorbs constant terms into $b'$.
\end{proof}

\begin{remark}[DPDA compilation and batch efficiency]
When nondeterminism forces persistent-stack branching, $S_t^E$ can grow with beam even
at fixed prefix length.
Pre$^3$ targets this by compiling LR(1) structures to a DPDA and precomputing
prefix-conditioned edges to reduce runtime path exploration \cite{chen-etal-2025-pre3}.
Within our framework this corresponds to reducing active configuration fanout and
shrinking variance in $S_t^E$ across steps.
\end{remark}

\subsection{Automated grammar optimization}

Oracle invariance (Theorem~\ref{thm:oracle}) permits a compiler searching over
equivalent grammars to minimize an engine-dependent cost model.

\begin{definition}[Grammar optimization problem]\label{def:opt}
Fix engine family $E$, tokenizer $\tau$, and workload $\mathcal{D}$.
Define the feasible set $\mathcal{G}(G)\coloneqq \{G' : \Lang(G')=\Lang(G)\}$ and
objective
\[
  \min_{G'\in \mathcal{G}(G)} \;\; \mathbb{E}_{w\sim \mathcal{D}}
  \Big[\mathrm{Cost}_E(G',\tau;w)\Big],
\]
where $\mathrm{Cost}_E$ can combine static terms (e.g., $\kappa(G')$), dynamic terms
(expected SAC proxy), and tokenizer alignment overhead.
\end{definition}

\begin{remark}[Tokenizer alignment]
DOMINO highlights that improper alignment between subword vocabularies and grammar
terminals can distort model distributions and incur ``bridge'' handling overhead
\cite{pmlr-v235-beurer-kellner24a}.
Park et al.\ formalize lexer-state-dependent token-to-terminal mapping to avoid
inefficient combinations at runtime \cite{park2025flexible}.
Thus misalignment cost is a first-class component of $\mathrm{Cost}_E$.
\end{remark}

\begin{definition}[Inlining rewrite]\label{def:inlining}
Let $A\in N$ have productions $A\to \gamma_1 \mid \cdots \mid \gamma_m$.
An \emph{inlining} rewrite replaces a production $B\to \alpha A \beta$ by the set
$\{B\to \alpha \gamma_i \beta : i=1,\dots,m\}$, leaving other productions unchanged.
\end{definition}

\begin{lemma}[Inlining preserves the language]\label{lem:inline}
Assume $A$ is not the start symbol and the inlining rewrite introduces no new
nonterminals.
Then the rewritten grammar $G'$ satisfies $\Lang(G')=\Lang(G)$.
\end{lemma}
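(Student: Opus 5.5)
We prove the two inclusions separately. We work with the stronger, nonterminal-indexed statement: for every $X\in N$ and $w\in\Sigma^*$, $X\Rightarrow^*_G w$ iff $X\Rightarrow^*_{G'} w$. Specializing to $X=S$ then gives $\Lang(G')=\Lang(G)$. The hypothesis that no new nonterminals are introduced means $G$ and $G'$ share $N$, $\Sigma$ and $S$. The hypothesis that $A$ is not the start symbol is not needed by the argument below beyond this, so we only invoke it to fix $S$ unchanged. Throughout, $P'=(P\setminus\{B\to\alpha A\beta\})\cup\{B\to\alpha\gamma_i\beta: i=1,\dots,m\}$. Here $\gamma_1,\dots,\gamma_m$ are the right-hand sides of $A$ \emph{as they appear in $G$}, as in Definition~\ref{def:inlining}.

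\emph{Inclusion $\Lang(G')\subseteq\Lang(G)$.} Every production of $P'$ is either already in $P$, or is a new production $B\to\alpha\gamma_i\beta$. Each new production is simulated in $G$ by the two-step derivation $B\Rightarrow_G \alpha A\beta\Rightarrow_G\alpha\gamma_i\beta$, using $A\to\gamma_i\in P$ in the second step. Replacing each step of a $G'$-derivation by its (one- or two-step) $G$-simulation yields a $G$-derivation of the same sentential forms. Hence $X\Rightarrow^*_{G'}w$ implies $X\Rightarrow^*_G w$.

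\emph{Inclusion $\Lang(G)\subseteq\Lang(G')$.} Here we argue on parse trees, by strong induction on the number of internal nodes of a $G$-parse tree with root $X$ and yield $w$. If the root production lies in $P'$, we apply the induction hypothesis to each child subtree, which is strictly smaller, and reattach the resulting $G'$-trees. Otherwise the root uses the removed production $B\to\alpha A\beta$, so $X=B$. The child labelled by the distinguished occurrence of $A$ is expanded in $G$ by some $A\to\gamma_i\in P$. We then build a new root labelled $B$ using $B\to\alpha\gamma_i\beta\in P'$. Its children are the subtrees for the symbols of $\alpha$, the subtrees hanging below the $A$-child (one per symbol of $\gamma_i$), and the subtrees for $\beta$. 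Each of these is a proper subtree of the original tree with strictly fewer internal nodes, so by induction each is replaced by a $G'$-tree with the same root symbol and yield. The yield of the contracted tree is unchanged, namely $w$.

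The step I expect to need the most care is the self-referential case $A=B$. There the replaced production is itself one of the $\gamma_i$, so the $A$-child may again use $B\to\alpha A\beta$, which is no longer in $P'$. The plan handles this through two points. First, the new production set is indexed by $A$'s right-hand sides in $G$, so $B\to\alpha(\alpha A\beta)\beta\in P'$ is available for the contracted root. Second, the offending occurrence lies in a strictly smaller subtree, so it is eliminated by the induction hypothesis rather than at the root. Because the induction measure is the subtree size and not the number of uses of $B\to\alpha A\beta$, the argument does not require termination of any iterated contraction. This completes the plan for Lemma~\ref{lem:inline}.
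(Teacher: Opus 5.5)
Your proof is correct and uses the same idea as the paper's: fuse each application of $B\to\alpha A\beta$ with the production that expands that occurrence of $A$ into a single application of $B\to\alpha\gamma_i\beta$, which gives $\Lang(G)\subseteq\Lang(G')$. Your version is more careful than the paper's sketch. Strong induction on parse-tree size handles nested and self-referential ($A=B$) uses, and for $\Lang(G')\subseteq\Lang(G)$ you give the actual two-step simulation argument, where the paper only asserts that the reverse containment is ``symmetric''.
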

\begin{proof}
Each derivation in $G$ applying $B\to \alpha A \beta$ followed by $A\Rightarrow^*\gamma_i$
maps to a derivation in $G'$ using $B\to \alpha\gamma_i\beta$; thus
$\Lang(G)\subseteq \Lang(G')$.
The reverse containment is symmetric.
\end{proof}

\begin{remark}[Equality saturation for grammar optimization]
Even with local rewrites, the search space of equivalent grammars is large.
Equality saturation with e-graphs \cite{tate2009eqsat,willsey2021egg} offers a
principled approach: represent grammar fragments in an e-graph, apply
equivalence-preserving rewrites (inlining, delegation elimination, recursion
normalization), attach cost analyses estimating $\kappa$, SAC proxy surrogates, and
tokenizer alignment, then extract the minimum-cost equivalent grammar.
Hardness results (NP-completeness of VPA minimization \cite{gauwin2020vpa}) motivate
heuristic or class-restricted optimization rather than global optimality.
\end{remark}

% ============================================================
\section{Related Work}\label{sec:related}
% ============================================================

\paragraph{Parsing theory and reachability.}
Classic CFG parsing algorithms provide the backbone for our formalization
\cite{earley1970efficient,younger1967recognition}.
GLR parsing and ambiguity complexity are developed in \cite{johnson1991glr}, and
SPPF-based packed forest representations in \cite{scott2010sppf}.
Valiant's reduction of CFG recognition to Boolean matrix multiplication
\cite{valiant1975cfg} and Lee's converse reduction \cite{lee2002bmm} establish tight
connections between general CFG parsing complexity and matrix multiplication; our
engine-independent lower bounds (Section~\ref{sec:lb}) are complementary and
output-structure driven.
CFL reachability and pushdown model checking are developed in
\cite{reps1995graphreachability,bouajjani1997reachability,alur2005rsm}.

\paragraph{Grammar-constrained decoding.}
PICARD uses incremental parsing to reject invalid subword tokens during decoding
\cite{scholak-etal-2021-picard}.
DOMINO emphasizes subword-aligned constraining and non-invasive masking
\cite{pmlr-v235-beurer-kellner24a}.
Park et al.\ analyze preprocessing/online tradeoffs and lexer-state-dependent token
mapping \cite{park2025flexible}.
XGrammar introduces vocabulary splitting, persistent stacks, grammar transformations,
and CPU/GPU overlap \cite{dong2024xgrammar}.
SynCode provides a completeness/soundness framing \cite{ugare2024syncode}.
Pre$^3$ compiles deterministic pushdown automata to reduce runtime path exploration in
large-batch settings \cite{chen-etal-2025-pre3}.
JSONSchemaBench provides a large-scale structured-output benchmark with a MaskBench
component isolating mask computation performance
\cite{geng2025jsonschemabench,jsonschemabench_repo}.
Libraries such as Guidance, Outlines, and LLGuidance provide widely used constraint
interfaces \cite{guidance,guidance_github,outlines_repo,llguidance_repo}.

\paragraph{Stochastic processes and $h$-transforms.}
The theory of Doob $h$-transforms characterizes conditional Markov processes and
bridges probabilistic analysis with harmonic function theory
\cite{doob1957htransform,norris1997markov}.
Our use of the $h$-transform to characterize GCD as true conditional decoding
(Section~\ref{sec:stoch}) connects constrained generation to the classical theory of
conditioning on rare events.

\paragraph{Grammar optimization.}
Equality saturation and e-graphs \cite{tate2009eqsat,willsey2021egg} provide a
principled framework for optimization under equivalence, which we propose as a basis
for automated grammar refactoring.
NP-completeness of visibly pushdown automaton minimization \cite{gauwin2020vpa}
motivates heuristic and class-restricted approaches.

\paragraph{Neural architecture coupling.}
The Transformer \cite{vaswani2017attention} and MoE architectures
\cite{shazeer2017moe,fedus2022switch} are the inference targets for our neural
integration results.
Constrained generation in semantic parsing is studied in \cite{shin-etal-2021-constrained},
and LMQL provides a programming interface over constrained LLM decoding
\cite{beurerkellner2023lmql}.

% ============================================================
\section{Conclusion}\label{sec:conclusion}
% ============================================================

We formalized grammar-constrained decoding as a coupling between a neural next-token
model and a pushdown reachability oracle derived from a CFG.
Our contributions are as follows.

\emph{Oracle invariance} (Theorem~\ref{thm:oracle}): language-equivalent grammars
induce identical admissible token sets, hence identical logit masks, for every prefix.
This invariance coexists with provable differences in engine state-space size and online
update cost.

\emph{State-space blowup} (Lemma~\ref{lem:blowup}): for the $a^n b^n$ language, we
gave exact control-state counts under a concrete PDA compilation and showed a $15/8$
blowup factor from redundant nonterminal delegation.

\emph{SAC bounds} (Theorem~\ref{thm:sac-sep}, Corollary~\ref{cor:cumulative}):
concatenation grammars incur $\Theta(t^2)$ incremental packed-structure growth and
$\Theta(n^3)$ cumulative cost; right-recursive grammars admit $O(1)$ and $O(n)$,
respectively.

\emph{Engine-independent lower bounds} (Theorem~\ref{thm:lb}): any sound,
retrieval-efficient, parse-preserving masking engine must incur $\Omega(t^2)$ per-token
work on $G_4$, making the SAC lower bound unconditional within this model.

\emph{Decoding-cost equivalence classes} (Theorem~\ref{thm:min-rep}): minimal-SAC
grammar representatives exist within any bounded-rewrite family, providing a
mathematically precise notion of canonical low-SAC normal forms.

\emph{Grammar-conditioned processes} (Theorem~\ref{thm:doob},
Proposition~\ref{prop:kl-bound}, Corollary~\ref{cor:tv-bound}): the true conditional
sampler is characterized by a Doob $h$-transform, and hard masking incurs a one-step
KL distortion bounded by $\log \Gamma(y_{<t})$, where $\Gamma$ measures
survival-probability spread among admissible next tokens.

\emph{Neural integration and performance modeling}: we derived latency envelopes
for Transformer and MoE architectures, showed that SAC-induced bottlenecks affect
masking-critical-path time under beam search, and connected these results to
instrumentation-based predictive models and automated grammar optimization via
equivalence-preserving rewrites and equality saturation
\cite{geng2025jsonschemabench,chen-etal-2025-pre3,tate2009eqsat,willsey2021egg}.

Together, these results establish a rigorous theoretical foundation for grammar
refactoring as a latency optimization problem with well-defined semantic invariants and
measurable cost surrogates.

\end{document}